\def\algo{\mbox{ARMOR}\xspace}
\def\algofull{Adversarial Model for Offline Reinforcement Learning\xspace}
\def\algoIL{ARMOR-IL\xspace}
\def\algohighlighted{\underline{A}dvesa\underline{r}ial  \underline{M}odel for \underline{O}ffline \underline{R}inforcement Learning\xspace}
\newcommand{\Vmin}{0}
\def\newtitle{\mbox{\algofull}} 
\title{\newtitle}
\author{%
  Mohak Bhardwaj\thanks{Equal contribution} \\
  University of Washington \\
  \texttt{mohakb@cs.washington.edu} \\
  \And
  Tengyang Xie\footnotemark[1] \\
  Microsoft Research \& UW-Madison \\
  \texttt{tx@cs.wisc.edu} \\
  \And
  Byron Boots \\
  University of Washington \\
  \texttt{bboots@cs.washington.edu} \\
  \And
  Nan Jiang \\
  UIUC \\
  \texttt{nanjiang@illinois.edu} \\
  \And
  Ching-An Cheng \\
  Microsoft Research, Redmond \\
  \texttt{chinganc@microsoft.com} \\
}
\begin{document}

\maketitle

\begin{abstract}

We propose a novel model-based offline Reinforcement Learning (RL) framework, called \algofull (\algo), which can robustly learn policies to improve upon an arbitrary reference policy regardless of data coverage.
\algo is designed to optimize policies for the worst-case performance relative to the reference policy through adversarially training a Markov decision process model.
In theory, we prove that \algo, with a well-tuned hyperparameter, can compete with the best policy within data coverage when the reference policy is supported by the data. 
At the same time, \algo is robust to hyperparameter choices: the policy learned by \algo, with \emph{any} admissible hyperparameter, would never degrade the performance of the reference policy, even when the reference policy is not covered by the dataset.
To validate these properties in practice, we design a scalable implementation of \algo, which by adversarial training, can optimize policies without using model ensembles in contrast to typical model-based methods.
We show that \algo achieves competent performance with both state-of-the-art  offline model-free and model-based RL algorithms and can robustly improve the reference policy over various hyperparameter choices.\footnote{Open source code is available at: https://sites.google.com/view/armorofflinerl/.}

\end{abstract}
\section{Introduction}

Offline reinforcement learning (RL) is a technique for learning decision-making policies from logged data~\citep{lange2012batch,levine2020offline,jin2021pessimism,xie2021bellman}. In comparison with alternate learning techniques, such as off-policy RL and imitation learning (IL), offline RL reduces the data assumption needed to learn good policies and does not require collecting new data. Theoretically, offline RL can learn the best policy that the given data can explain: as long as the offline data includes the scenarios encountered by a near-optimal policy, an offline RL algorithm can learn such a near-optimal policy, even when the data is collected by highly sub-optimal policies and/or is not diverse. Such robustness to data coverage makes offline RL a promising technique for solving real-world problems, as collecting diverse or expert-quality data in practice is often expensive or simply infeasible.

The fundamental principle behind offline RL is the concept of pessimism, which considers worst-case outcomes for scenarios without data. In algorithms, this is realized by (explicitly or implicitly) constructing performance lower bounds in policy learning which penalizes 
uncertain actions. Various designs have been proposed to construct such lower bounds, including behavior regularization~\citep{fujimoto2019off,kumar2019stabilizing,wu2019behavior,laroche2019safe,fujimoto2021minimalist}, point-wise pessimism based on negative bonuses or truncation~\citep{kidambi2020morel,jin2021pessimism}, value penalty~\citep{kumar2020conservative, yu2020mopo}, or two-player games~\citep{xie2021bellman,uehara2021pessimistic,cheng2022adversarially}. 
Conceptually, the tighter the lower bound is, the better the learned policy would perform; see a detailed discussion of related work in \cref{appendix:related work}.

\begin{figure}
    \centering
    \includegraphics[trim={0cm, 0, 0cm, 0}, clip, width=0.8\textwidth]{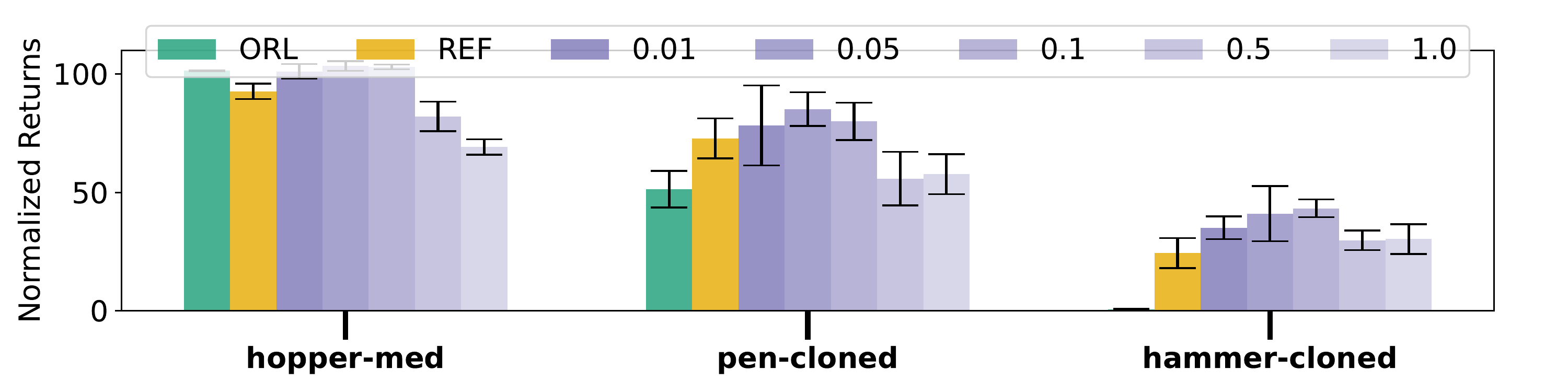}
    \caption{Robust Policy Improvement: \algo can improve performance over the reference policy (REF) over a broad range of pessimism hyperparameter (purple) regardless of data coverage. ORL denotes best offline RL policy without using the reference policy, and reference is obtained by behavior cloning on expert dataset.    \vspace{-6mm}}
    \label{fig:intro}
\end{figure}

Despite these advances, offline RL still has not been widely adopted to build learning-based decision systems beyond academic research. %

One important factor we posit is the issue of performance degradation:
Usually, the systems we apply RL to have currently running policies, 
such as an engineered autonomous driving rule or a heuristic-based system for diagnosis, and the goal of applying a learning algorithm is often to further improve upon these baseline \emph{reference policies}. 
As a result, it is imperative that the policy learned by the algorithm does not degrade the base performance. This criterion is especially critical for applications where poor decision outcomes cannot be tolerated. 

However, running an offline RL algorithm based on pessimism, in general, is not free from performance degradation. While there have been algorithms with policy improvement guarantees~\citep{laroche2019safe,fujimoto2019off, kumar2020conservative,fujimoto2021minimalist,cheng2022adversarially}, such guarantees apply only to the  behavior policy that collects the data, which might not necessarily be the reference policy. 
In fact, quite often these two policies are different. For example, in robotic manipulation, it is common to have a dataset of activities different from the target task. 
In such a scenario, comparing against the behavior policy is meaningless, as these policies do not have meaningful performance in the target task.

In this work, we propose a novel model-based offline RL framework, called \algohighlighted (\algo), which can robustly learn policies that improve upon an arbitrary reference policy by adversarially training a Markov decision process (MDP) model, regardless of the data quality.
\algo is designed based on the concept of relative pessimism~\citep{cheng2022adversarially}, which aims to optimize for the worst-case relative performance over uncertainty. 
In theory, we prove that, owing to relative pessimism, the \algo policy never degrades the performance of the reference policy for a range of hyperparameters which is given beforehand, a property known as Robust Policy Improvement (RPI)~\citep{cheng2022adversarially}.
In addition, when the right hyperparameter is chosen, and the reference policy is covered by the data, we prove that the \algo policy can also compete with any policy covered by the data in an absolute sense. 
To our knowledge, RPI property of offline RL has so far been limited to comparing against the data collection policy~\citep{fujimoto2019off,kumar2019stabilizing,wu2019behavior,laroche2019safe,fujimoto2021minimalist,cheng2022adversarially}.
In \algo, by adversarially training an MDP model, we extend the technique of relative pessimism to achieve RPI with \emph{arbitrary} reference policies, regardless of whether they collected the data or not~(Fig.~\ref{fig:intro}). 
In addition to theory, we design a scalable deep-learning implementation of \algo to validate these claims that jointly trains an MDP model and the state-action value function to minimize the estimated performance difference between the policy and the reference using model-based rollouts.
Our implementation achieves state-of-the-art (SoTA) performance on D4RL benchmarks~\citep{fu2020d4rl}, while using only a \textit{single} model (in contrast to ensembles used in existing model-based offline RL works). This  makes \algo a better framework for using high-capacity world models (e.g.\citep{hafner2023mastering}) for which building an ensemble is too expensive.  We also empirically validate the RPI property of our implementation.

\section{Preliminaries}

\paragraph{Markov Decision Process}
We consider learning in the setup of an infinite-horizon discounted Markov Decision Process (MDP). 
An MDP $M$ is defined by the tuple  $\langle \mathcal{S}, \mathcal{A}, P_M, R_M,  \gamma \rangle$, where $\mathcal{S}$ is the state space, $\mathcal{A}$ is the action space, $P_M : \mathcal{S} \times \mathcal{A} \rightarrow \Delta\left(\mathcal{S}\right)$ is the transition dynamics, $R_M: \mathcal{S} \times \mathcal{A} \rightarrow \left[0, 1 \right]$ is a scalar reward function 
and $\gamma \in [0, 1)$ is the discount factor. 
A policy $\pi$ is a mapping from $\Scal$ to a distribution on $\Acal$.
For $\pi$, we let $d_M^\pi(s,a)$ denote the discounted state-action distribution obtained by running $\pi$ on $M$ from an initial state distribution $d_0$, i.e $d_M^\pi(s,a) = \left(1 - \gamma\right)\mathbb{E}_{\pi, M}\left[\sum_{t=0}^{\infty} \gamma^t \mathbbm{1} \left(s_t = s, a_t = a  \right) \right] $. Let $J_M(\pi)= \mathbb{E}_{\pi, M}\left[ \sum_{t=0}^{\infty} \gamma^t r_t \right]$ be the expected discounted return of policy $\pi$ on $M$ starting from $d_0$, where $r_t = R_M(s_t, a_t)$. We define the value function as $V^\pi_M(s) = \mathbb{E}_{\pi, M}\left[\sum_{t=0}^{\infty} \gamma^t r_t | s_0 = s \right] $, and the state-action value function (i.e., Q-function) as $Q^\pi_M(s,a) = \mathbb{E}_{\pi, M}\left[\sum_{t=0}^{\infty} \gamma^t r_t  |s_0 = s, s_0 = a\right]$. By this definition, we note $J_M(\pi) = \mathbb{E}_{d_0}[V_M^\pi(s)] = \mathbb{E}_{d_0, \pi}[Q_M^\pi(s,a)]$.
We use $[\Vmin,\Vmax]$ to denote the range of value functions, where $\Vmax \geq 1$. 
We denote the ground truth MDP as $M^\star$, and $J= J_{M^\star}$

\paragraph{Offline RL}

The aim of offline RL is to find the policy that maximizes $J(\pi)$, while using a fixed dataset $\Dcal$ collected by a behavior policy $\mu $. 
We assume the dataset $\mathcal{D}$ consists of $\lbrace \left(s_n, a_n, r_n, s_{n+1}\right) \rbrace_{n=1}^{N}$, where $(s_n,a_n)$ is sampled from $d^\mu_{M^\star}$ and $r_n, s_{n+1}$ follow $M^\star$; for simplicity, we also write $\mu(s,a) = d^\mu_{M^\star}(s,a)$. 

We assume that the learner has access to a Markovian policy class $\Pi$ and an MDP model class $\Mcal$.
\begin{assumption}[Realizability]
\label{asm:realizability}
We assume the ground truth model $M^\star$ is in the model class $\Mcal$.
\end{assumption}
In addition, we assume that we are provided a reference policy $\piref$. In practice, such a reference policy represents a baseline whose performance we want to improve with offline RL and data. 
\begin{assumption}[Reference policy]
\label{asm:reference}
 We assume access to a reference policy $\piref$, which can be queried at any state. We assume $\piref$ is realizable, i.e., $\piref \in \Pi$.
\end{assumption}
If $\piref$ is not provided, we can still run \algo as a typical offline RL algorithm, by first performing behavior cloning on the data and setting the cloned policy as $\piref$. In this case, \algo has RPI with respect to the behavior policy.

\paragraph{Robust Policy Improvement}
RPI is a notion introduced in~\citet{cheng2022adversarially}, which means that the offline algorithm can learn to improve over the behavior policy, using hyperparameters within a known set. Algorithms with RPI are more robust to hyperparameter choices, and they are often derived from the principle of relative pessimism~\citep{cheng2022adversarially}.
In this work, we extend the RPI concept to compare with an arbitrary reference (or baseline) policy, which can be different from the behavior policy and can take actions outside data support.


\section{\algofull (\algo)} \label{sec:algo}

\algo is a model-based offline RL algorithm designed with relative pessimism. 
The goal of \algo is to find a policy $\pihat$ that maximizes the performance difference $J(\pihat) - J(\piref)$ to a given reference policy $\piref$, while accounting for the uncertainty due to limited data coverage.
\algo achieves this by solving a two-player game between a learner policy and an adversary MDP model: 
\begin{align}
\label{eq:def_pihat}
\pihat = \argmax_{\pi \in \Pi} \min_{M \in \Mcal_\alpha} J_M(\pi) - J_M(\piref)
\end{align}
based on a version space of MDP models
\begin{align} \label{eq:v_space}
\Mcal_\alpha = \{M \in \Mcal:  \Ecal_\Dcal(M) - \min_{M' \in \Mcal} \Ecal_{\Dcal}(M') \leq \alpha \},
\end{align}
where we define the model fitting loss as
\begin{align} \label{eq:def_loss}
 \textstyle
  \Ecal_\Dcal(M) \coloneqq - \sum_{ \Dcal} \log P_M(s' \mid s,a) + \nicefrac{(R_M(s,a) - r)^2 }{\Vmax^2}
\end{align} 
and $\alpha \geq 0$ is a bound on statistical errors  such that $M^\star\in \Mcal_\alpha$.
In this two-player game, \algo is optimizing a lower bound of the relative performance $J(\pi) - J(\piref)$. This is due to the construction that $M^\star\in \Mcal_\alpha$, which ensures $\min_{M \in \Mcal_\alpha} J_M(\pi) - J_M(\piref) \leq J_{M^\star}(\pi) - J_{M^\star}(\piref)$.

One interesting property that follows from optimizing the relative performance lower bound is that $\pihat$ is guaranteed to always be no worse than $\piref$, for a wide range of $\alpha$ and regardless of the relationship between $\piref$ and the data $\Dcal$.
\begin{proposition}
For any $\alpha$ large enough such that  $M^\star\in \Mcal_\alpha$, it holds that $J(\pihat) \geq J(\piref)$.
\end{proposition}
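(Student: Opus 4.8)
The plan is to prove the claim by chaining two inequalities, both of which pivot on the reference policy $\piref$ as a feasible comparator: a pessimism (lower-bound) inequality coming from $M^\star \in \Mcal_\alpha$, and an optimality inequality coming from $\pihat$ being the maximizer of the inner game. The target $J(\pihat) - J(\piref) \ge 0$ will drop out by inserting the inner objective's value as an intermediate quantity between these two bounds.

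First I would establish that the inner minimization is a genuine lower bound on the true relative performance. This is exactly the observation already noted in the text: because the hypothesis on $\alpha$ guarantees $M^\star \in \Mcal_\alpha$, minimizing $J_M(\pi) - J_M(\piref)$ over the whole version space can only decrease the value compared to evaluating it at the single feasible model $M^\star$. Instantiating this at $\pi = \pihat$ gives
\begin{align*}
\min_{M \in \Mcal_\alpha}\bigl[J_M(\pihat) - J_M(\piref)\bigr] \le J_{M^\star}(\pihat) - J_{M^\star}(\piref) = J(\pihat) - J(\piref),
\end{align*}
so it suffices to show the left-hand side is nonnegative.

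Second I would exploit the optimality of $\pihat$ together with the realizability of $\piref$. By Assumption~\ref{asm:reference} we have $\piref \in \Pi$, so $\piref$ is an admissible competitor in the outer $\argmax$. Evaluating the inner objective at $\pi = \piref$ makes the integrand $J_M(\piref) - J_M(\piref) = 0$ for \emph{every} $M \in \Mcal_\alpha$, hence $\min_{M \in \Mcal_\alpha}[J_M(\piref) - J_M(\piref)] = 0$. Since $\pihat$ maximizes $\min_{M \in \Mcal_\alpha}[J_M(\pi) - J_M(\piref)]$ over $\pi \in \Pi$, its value is at least the value attained at $\piref$, giving $\min_{M \in \Mcal_\alpha}[J_M(\pihat) - J_M(\piref)] \ge 0$. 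Combining with the first step yields $J(\pihat) - J(\piref) \ge 0$, as desired.

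There is no hard analytic obstacle here; the entire content is bookkeeping, and the one point worth handling carefully is that the \emph{same} version space $\Mcal_\alpha$ must appear in both the lower-bound step and the optimality step, and that $\piref$ plays a double role (it must be realizable to serve as a comparator in the $\argmax$, and the cancellation $J_M(\piref) - J_M(\piref) = 0$ must be read as holding uniformly in $M$). The conceptual subtlety I would emphasize is that data coverage of $\piref$ never enters: the cancellation at $\piref$ is exact and model-independent, which is precisely why the robust policy improvement guarantee holds for an \emph{arbitrary} reference policy, whether or not it is supported by $\Dcal$.
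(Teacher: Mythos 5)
Your proof is correct and is essentially identical to the paper's argument: you lower-bound $J(\pihat)-J(\piref)$ by the inner minimum via $M^\star\in\Mcal_\alpha$, and then lower-bound the max-min value by zero using $\piref\in\Pi$ as a feasible comparator, exactly as in the proof of \cref{thm:RPI}. No gaps; the emphasis that data coverage of $\piref$ never enters matches the paper's own discussion.
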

This fact can be easily reasoned: Since $\piref \in \Pi$, we have $\max_{\pi \in \Pi} \min_{M \in \Mcal_\alpha} J_M(\pi) - J_M(\piref) \geq \min_{M \in \Mcal_\alpha} J_M(\piref) - J_M(\piref)  = 0$. 
In other words, \algo achieves the RPI property with respect to any reference policy $\piref$ and offline dataset $\Dcal$.

This RPI property of \algo is stronger than the RPI property in the literature. In comparison, previous algorithms with RPI~\citep{fujimoto2019off,kumar2019stabilizing,wu2019behavior,laroche2019safe,fujimoto2021minimalist,cheng2022adversarially} are only guaranteed to be no worse than the behavior policy that collected the data.
In \cref{sec:theory}, we will also show that when $\alpha$ is set appropriately, \algo can provably compete with the best data covered policy as well, as prior offline RL works~\citep[e.g.,][]{xie2021bellman,uehara2021pessimistic,cheng2022adversarially}.

\vspace{-1mm}
\subsection{An Illustrative Toy Example}
\vspace{-1mm}
Why does \algo have the RPI property, even when the reference policy $\piref$ is not covered by the data $\Dcal$? While we will give a formal analysis soon in \cref{sec:theory}, here we provide some intuitions as to why this is possible.
First, notice that \algo has access to the reference policy $\piref$. Therefore, a trivial way to achieve RPI with respect to $\piref$ is to just output $\piref$. However, this na\"ive algorithm while never degrading $\piref$ cannot learn to improve from $\piref$.
\algo achieves these two features simultaneously by
\begin{enumerate*}[label=\emph{\arabic*)}]
    \item learning an MDP Model, and 
    \item adversarially training this MDP model to minimize the relative performance difference to $\piref$ during policy optimization.
\end{enumerate*}

We illustrate this by 
a one-dimensional discrete MDP example with five possible states as shown in \cref{fig:toy_mdp}. The dynamic is deterministic, and the agent always starts in the center cell. The agent receives a lower reward of 0.1 in the left-most state and a high reward of 1.0 upon visiting the right-most state. Say, the agent only has access to a dataset from a sub-optimal policy that always takes the left action to receive the 0.1 reward. Further, let's say we have access to a reference policy that demonstrates optimal behavior on the true MDP by always visiting the right-most state. However, it is unknown a priori that the reference policy is optimal. In such a case, typical offline RL methods can only recover the sub-optimal policy from the dataset as it is the best-covered policy in the data. 

\begin{figure}[t!]
    \centering
    \begin{subfigure}[b]{0.34\columnwidth}
        \centering
        \includegraphics[trim={0cm 0cm 0cm 0.1cm}, clip, width=\columnwidth]{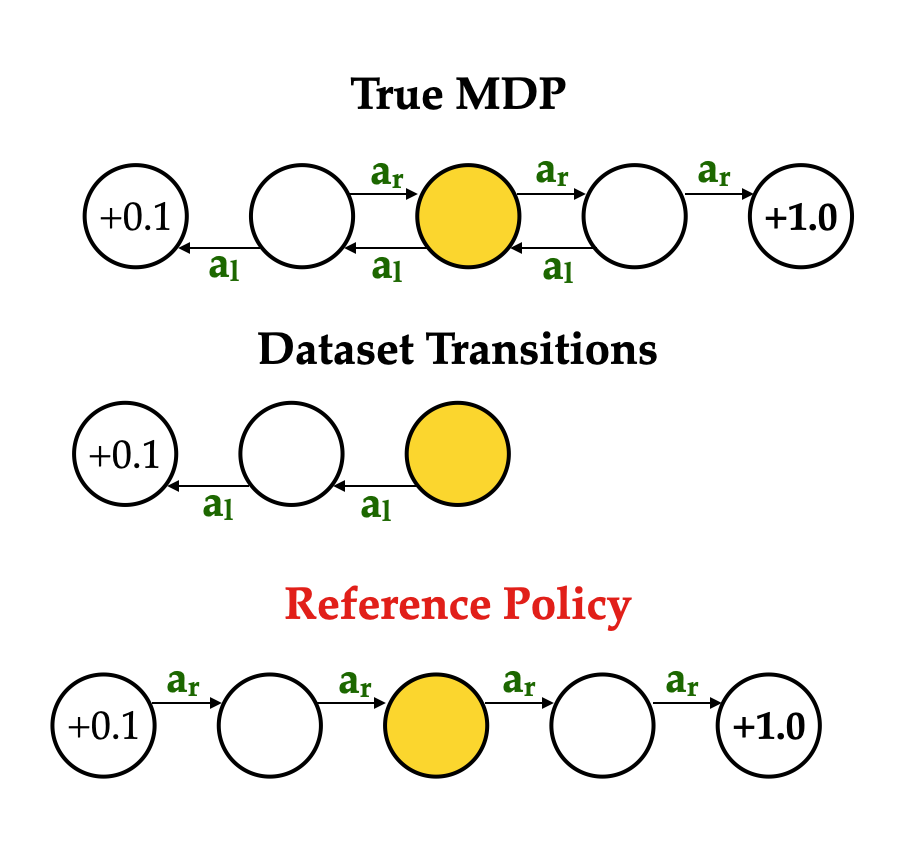}
    \end{subfigure}
    \begin{subfigure}[b]{0.65\columnwidth}
        \centering
        \includegraphics[trim={0cm 0cm 0cm 0.1cm}, clip, width=\columnwidth]{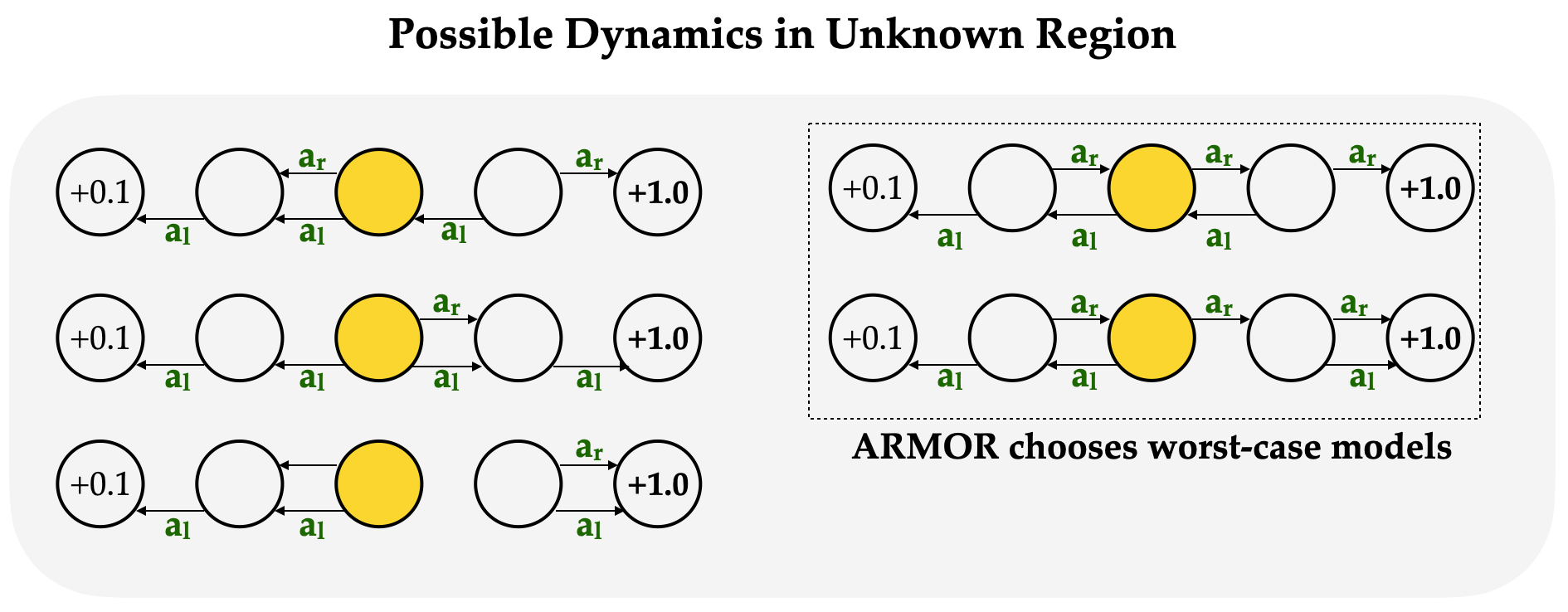}
    \end{subfigure}    
    \caption{A toy MDP illustrating the RPI property of \algo. (Top) The true MDP has deterministic dynamics where taking the left ($a_l$) or right ($a_r$) actions takes the agent to corresponding states;  start state is in yellow. The suboptimal behavior policy visits only the left part of the state space, and the reference policy demonstrates optimal behavior by always choosing $a_r$. (Bottom) A subset of possible data-consistent MDP models in the version space. The adversary always chooses the MDP that makes the reference maximally outperform the learner. In response, the learner will learn to mimic the reference outside data support to be competitive.}
    \label{fig:toy_mdp}
    \vspace{-5mm}
\end{figure}

\algo can learn to recover the expert reference policy in this example by performing rollouts with the adversarially trained MDP model.
From the realizability assumption (\cref{asm:realizability}), we know that the version space of models contains the true model (i.e., $M^\star \in \Mcal_\alpha$). The adversary can then choose a model from this version space where the reference policy $\piref$ maximally outperforms the learner. In this toy example, the model selected by the adversary would be the one allowing the expert policy to reach the right-most state. Now, optimizing  relative performance difference with respect to this model will ensure that the learner can recover the expert behavior, since the only way for the learner to stay competitive with the reference policy is to mimic the reference policy in the region outside data support.
In other words, the reason why \algo has RPI to $\piref$ is that \textbf{its
adversarial model training procedure can augment the original offline data with new states and actions that would cover those generated by running the reference policy}. 
\footnote{Note that \algo does not depend on knowledge of the true reward function and similar arguments hold in the case of learned rewards as we illustrate in Appendix~\ref{appendix:toy_example}.}

\subsection{Theoretical Analysis} \label{sec:theory}

Now we make the above discussions formal and give theoretical guarantees on \algo's absolute performance and RPI property. To this end, we introduce a single-policy concentrability coefficient, which measures the distribution shift between a policy $\pi$ and the data distribution $\mu$.

\begin{definition}[Generalized Single-policy Concentrability]
We define the generalized single-policy concentrability for policy $\pi$, model class $\Mcal$ and offline data distribution $\mu$ as
$
\C_\Mcal(\pi) \coloneqq \sup_{M \in \Mcal} \frac{\E_{d^\pi} \left[ \Ecal^\star(M) \right] }{\E_{\mu} \left[ \Ecal^\star(M)\right] },
$
where $\Ecal^\star(M) = \TV\left(P_M(\cdot \mid s,a), P_\Mtrue(\cdot \mid s,a)\right)^2 + \nicefrac{\left(R_M(s,a) - R^\star(s,a)\right)^2}{\Vmax^2}$.
\end{definition}
Note that $\C_\Mcal(\pi)$ is always upper bounded by the standard single-policy concentrability coefficient $\| d^\pi / \mu \|_\infty$~\citep[e.g.,][]{jin2021pessimism, rashidinejad2021bridging,xie2021policy}, but it can be smaller in general with model class $\Mcal$. It can also be viewed as a model-based analog of the one in~\citet{xie2021bellman}. A  detailed discussion around $\C_\Mcal(\pi)$ can be found in~\citet{uehara2021pessimistic}.

First, we present the absolute performance guarantee of \algo, which holds for a well-tuned $\alpha$.
\begin{theorem}[Absolute performance]
\label{thm:perf}
Under \cref{asm:realizability}, there is an absolute constant $c$ such that for any $\delta \in (0,1]$, if we set $\alpha = c\cdot(\log(\nicefrac{|\Mcal|}{\delta}))$ in \Eqref{eq:v_space}, then for any reference policy $\piref$ and comparator policy $\picom \in \Pi$, with probability $1 - \delta$, the policy $\pihat$ learned by \algo  in \Eqref{eq:def_pihat} satisfies that $J(\picom) - J(\pihat) $ is upper bounded by 
\begin{align*}\textstyle
\Ocal \left( \left(\sqrt{\C_\Mcal(\picom)} + \sqrt{\C_\Mcal(\piref)}\right) \frac{\Vmax}{1 - \gamma} \sqrt{\frac{\log(\nicefrac{|\Mcal|}{\delta})}{n}} \right).
\end{align*}
\end{theorem}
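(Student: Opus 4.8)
The plan is to bound the suboptimality $J(\picom) - J(\pihat)$ by exploiting the minimax structure of \Eqref{eq:def_pihat} together with the fact that, for the chosen $\alpha$, the true model lies in the version space. Throughout I would condition on a single high-probability event (constructed last) on which both $M^\star \in \Mcal_\alpha$ and every $M \in \Mcal_\alpha$ has small \emph{population} model error on $\mu$.

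First I would rewrite the suboptimality as a difference of relative performances, $J(\picom) - J(\pihat) = [J(\picom) - J(\piref)] - [J(\pihat) - J(\piref)]$, and lower-bound the second bracket using the game value. Since $M^\star \in \Mcal_\alpha$, we have $J(\pihat) - J(\piref) = J_{M^\star}(\pihat) - J_{M^\star}(\piref) \geq \min_{M \in \Mcal_\alpha}[J_M(\pihat) - J_M(\piref)]$, and because $\pihat$ maximizes the inner min over $\Pi$, this is in turn $\geq \min_{M \in \Mcal_\alpha}[J_M(\picom) - J_M(\piref)]$. Writing $\bar M$ for the minimizing model of this last expression, combining the inequalities yields $J(\picom) - J(\pihat) \leq [J_{M^\star}(\picom) - J_{\bar M}(\picom)] + [J_{\bar M}(\piref) - J_{M^\star}(\piref)]$, i.e., two model-mismatch terms evaluated on the fixed comparator and reference policies against a single adversarial model $\bar M \in \Mcal_\alpha$.

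Next I would bound each mismatch term with the value-difference (simulation) lemma, taking the occupancy with respect to $M^\star$ so that it becomes the true occupancy $d^\pi$. For the comparator, $J_{M^\star}(\picom) - J_{\bar M}(\picom) = \frac{1}{1-\gamma}\E_{d^{\picom}}[(R^\star - R_{\bar M}) + \gamma (P_{M^\star} - P_{\bar M})^\top V^{\picom}_{\bar M}]$, which since $V^{\picom}_{\bar M}$ is bounded by $\Vmax$ is at most $\frac{\Vmax}{1-\gamma}\E_{d^{\picom}}[\nicefrac{|R^\star - R_{\bar M}|}{\Vmax} + \TV(P_{M^\star}, P_{\bar M})]$. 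Cauchy--Schwarz then bounds this by $\frac{\Vmax}{1-\gamma}\sqrt{\E_{d^{\picom}}[\Ecal^\star(\bar M)]}$, and the definition of generalized single-policy concentrability converts the occupancy to $\mu$ via $\E_{d^{\picom}}[\Ecal^\star(\bar M)] \leq \C_\Mcal(\picom)\,\E_{\mu}[\Ecal^\star(\bar M)]$. The reference term is handled identically (write it as $-(J_{M^\star}(\piref) - J_{\bar M}(\piref))$ to expose $d^{\piref}$), producing the $\sqrt{\C_\Mcal(\piref)}$ contribution.

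Finally comes the statistical core, which I expect to be the main obstacle. I would need two facts about the empirical loss $\Ecal_\Dcal$: (i) that $M^\star \in \Mcal_\alpha$ with probability $1-\delta$ for $\alpha = c\log(\nicefrac{|\Mcal|}{\delta})$; and (ii) a generalization bound certifying that every $M \in \Mcal_\alpha$ satisfies $\E_\mu[\Ecal^\star(M)] = \Ocal(\nicefrac{\log(\nicefrac{|\Mcal|}{\delta})}{n})$. Both follow from concentration of the composite MLE-plus-least-squares objective: the reward term is a standard bounded least-squares argument, while the transition term requires the MLE density-estimation analysis relating the empirical log-likelihood gap to the population squared Hellinger distance (hence to $\TV^2$ via $\TV^2 \le H^2$), together with a union bound over the finite class $\Mcal$. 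The delicate part is carrying through the density-estimation bound and matching its constant into $\alpha$ so that (i) and (ii) hold simultaneously on one event. Substituting $\E_\mu[\Ecal^\star(\bar M)] = \Ocal(\nicefrac{\log(\nicefrac{|\Mcal|}{\delta})}{n})$ into the two mismatch bounds and summing then gives exactly $\Ocal((\sqrt{\C_\Mcal(\picom)} + \sqrt{\C_\Mcal(\piref)})\frac{\Vmax}{1-\gamma}\sqrt{\nicefrac{\log(\nicefrac{|\Mcal|}{\delta})}{n}})$.
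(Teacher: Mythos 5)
Your proposal is correct and follows essentially the same route as the paper's proof: the identical decomposition $J(\picom) - J(\pihat) \leq [J(\picom)-J(\piref)] - \min_{M\in\Mcal_\alpha}[J_M(\picom)-J_M(\piref)]$ via the maximin optimality and $M^\star\in\Mcal_\alpha$, the simulation lemma plus Cauchy--Schwarz and the concentrability definition to pass from $d^\pi$ to $\mu$, and the same statistical core (MLE concentration for the transition term, bounded least-squares for the reward term, union bound over $\Mcal$) to certify both that $M^\star\in\Mcal_\alpha$ and that every $M\in\Mcal_\alpha$ has population error $\Ocal(\log(\nicefrac{|\Mcal|}{\delta})/n)$ under $\mu$.
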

Roughly speaking, \cref{thm:perf} shows that $\pihat$ learned by \algo can compete with any policy $\picom$ with a large enough dataset, as long as the offline data $\mu$ has good coverage on $\picom$ (good coverage over $\piref$ can be automatically satisfied if we simply choose $\piref = \mu$, which yields $\C_\Mcal(\piref)=1$). 
Compared to the closest model-based offline RL work~\citep{uehara2021pessimistic}, if we set $\piref = \mu$ (data collection policy), \cref{thm:perf} leads to almost the same guarantee as~\citet[Theorem 1]{uehara2021pessimistic}  up to constant factors.

In addition to absolute performance, below we show that, under \cref{asm:realizability,asm:reference}, \algo has the RPI
property to $\piref$: it 
always improves over $J(\piref)$ for \emph{a wide range of  parameter $\alpha$}. Compared with the model-free ATAC algorithm in~\citet[Proposition 6]{cheng2022adversarially}, the threshold for $\alpha$ in \cref{thm:RPI} does not depend on sample size $N$ due to the model-based nature of \algo.

\begin{theorem}[Robust strong policy improvement]
\label{thm:RPI}
Under \cref{asm:realizability,asm:reference}, there exists an absolute constant $c$ such that for any $\delta \in (0,1]$, if: i) $\alpha \geq c\cdot(\log(\nicefrac{|\Mcal|}{\delta}))$ in \Eqref{eq:v_space}; ii) $\piref \in \Pi$, then with probability $1 - \delta$, the policy $\pihat$ learned by \algo in \Eqref{eq:def_pihat} satisfies $J(\pihat) \geq J(\piref)$. 
\end{theorem}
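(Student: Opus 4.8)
The plan is to reduce the statement to a single probabilistic claim — that the version space $\Mcal_\alpha$ contains the ground-truth model $M^\star$ — and then invoke the purely algebraic relative-pessimism argument already used to prove the earlier Proposition. Writing $g(\pi) \coloneqq \min_{M \in \Mcal_\alpha}(J_M(\pi) - J_M(\piref))$, the definition of $\pihat$ in \eqref{eq:def_pihat} gives $g(\pihat) = \max_{\pi \in \Pi} g(\pi) \geq g(\piref) = \min_{M\in\Mcal_\alpha} 0 = 0$, using only $\piref \in \Pi$ (\cref{asm:reference}); this step is deterministic and needs nothing about the data. On the event $\{M^\star \in \Mcal_\alpha\}$, feasibility of $M^\star$ in the inner minimization yields $g(\pihat) \leq J_{M^\star}(\pihat) - J_{M^\star}(\piref) = J(\pihat) - J(\piref)$, hence $J(\pihat) - J(\piref) \geq g(\pihat) \geq 0$. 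Thus it suffices to show that $\alpha \geq c\log(\nicefrac{|\Mcal|}{\delta})$ forces $M^\star \in \Mcal_\alpha$ with probability at least $1-\delta$.

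By \eqref{eq:v_space}, the event $M^\star \in \Mcal_\alpha$ is exactly $\max_{M\in\Mcal}(\Ecal_\Dcal(M^\star) - \Ecal_\Dcal(M)) \leq \alpha$, i.e.\ a \emph{one-sided}, $\Mcal$-uniform bound on how much worse the true model can fit the data than any competitor. This is precisely the high-probability model-fitting guarantee that also underlies \cref{thm:perf}, so I would isolate it as a standalone lemma and prove it once. Splitting the loss \eqref{eq:def_loss} into its log-likelihood and squared-reward parts, I would control the excess of $M^\star$ in each part separately and combine by a union bound with budget $\nicefrac{\delta}{2}$ each.

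For the dynamics part, $\Ecal_\Dcal(M^\star) - \Ecal_\Dcal(M)$ contributes $\sum_{\Dcal}\log\frac{P_M(s'\mid s,a)}{P_{M^\star}(s'\mid s,a)}$, and I would use the standard maximum-likelihood exponential-moment trick: conditioned on $(s,a)$, $\E[\sqrt{P_M/P_{M^\star}}] = \sum_{s'}\sqrt{P_M P_{M^\star}} \leq 1$ (the Bhattacharyya coefficient), so $\exp(\tfrac12 \sum_{\Dcal}\log\frac{P_M}{P_{M^\star}})$ is a nonnegative supermartingale with mean at most one. A Chernoff bound followed by a union bound over the finite class then gives $\sum_{\Dcal}\log\frac{P_M}{P_{M^\star}} \leq 2\log(\nicefrac{|\Mcal|}{\delta})$ for all $M$ simultaneously, up to the union-bound constant. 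For the reward part, each summand $(R_M(s,a)-r)^2 - (R_{M^\star}(s,a)-r)^2$ has conditional mean $-(R_{M^\star}(s,a)-R_M(s,a))^2 \leq 0$ (since $\E[r\mid s,a]=R^\star=R_{M^\star}$) and conditional variance bounded by a constant times the magnitude of that same mean, because rewards lie in $[0,1]$. A Bernstein / self-bounding argument then shows $\tfrac{1}{\Vmax^2}\sum_{\Dcal}((R_{M^\star}-r)^2-(R_M-r)^2) \leq O(\log(\nicefrac{|\Mcal|}{\delta}))$ uniformly, the negative mean absorbing the fluctuation. Adding the two bounds gives $\max_{M}(\Ecal_\Dcal(M^\star)-\Ecal_\Dcal(M)) \leq c\log(\nicefrac{|\Mcal|}{\delta})$ for an absolute constant $c$, which is exactly the claimed threshold on $\alpha$.

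The main obstacle is this statistical lemma, not the deterministic step, and within it the \emph{fast rate}: getting a threshold of order $\log(\nicefrac{|\Mcal|}{\delta})$ with no $\sqrt{N}$ and no variance inflation hinges on the self-normalizing structure — the Bhattacharyya coefficient being at most one for the likelihood term, and the negativity of the conditional mean (a consequence of $R^\star=R_{M^\star}$) dominating the variance for the reward term via a self-bounding inequality. Care is also needed with the dependence structure of the sample, which I would handle through the supermartingale/tower-property formulation rather than assuming full independence, and with the boundedness of the reward noise so that Bernstein applies. Finally, I would emphasize the \emph{robustness} in the conclusion: since enlarging $\alpha$ only enlarges $\Mcal_\alpha$, once $\alpha$ exceeds the threshold both $M^\star \in \Mcal_\alpha$ and $g(\pihat) \geq 0$ persist, so $J(\pihat) \geq J(\piref)$ holds for \emph{every} admissible $\alpha$, giving the advertised insensitivity to the pessimism hyperparameter.
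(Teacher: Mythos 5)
Your proposal is correct and follows essentially the same route as the paper: the deterministic relative-pessimism chain $g(\pihat) \ge g(\piref) = 0$ combined with the high-probability event $M^\star \in \Mcal_\alpha$, which the paper isolates as a standalone lemma on the model-fitting loss and proves by exactly the Chernoff/martingale MLE argument for the likelihood term and a Bernstein-type self-bounding argument for the squared-reward term. One small nit: the conditional mean of $(R_M - r)^2 - (R_{M^\star} - r)^2$ is $+(R_M - R_{M^\star})^2$, not its negative --- you wrote the difference in the opposite order from the quantity you actually bound --- but the displayed inequality and the substance of the argument are right.
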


\new{
The detailed proofs of \cref{thm:perf,thm:RPI}, as well as the discussion on how to relax \cref{asm:realizability,asm:reference} to the misspecified model and policy classes are deferred to \cref{app:allproofs}.
}



\def\Dreal{\Dcal_{\textrm{real}}}
\def\Dmodel{\Dcal_{\textrm{model}}}
\def\Drealmini{\Dreal^{\textrm{mini}}}
\def\Dmodelmini{\Dmodel^{\textrm{mini}}}
\def\ladv{l^{\textrm{adversary}}}

\begin{algorithm*}[t] \caption{\algo (\algofull)} \label{alg:armor}
{\bfseries Input:} Batch data $\Dreal$, policy $\pi$, MDP model $M$, critics $f_1, f_2$,  horizon $H$, constants $\beta, \lambda \geq 0$, $\tau\in[0,1]$, $w\in[0,1]$, 
\begin{algorithmic}[1]
\State Initialize target networks $\bar{f}_1\gets f_1$, $\bar{f}_2\gets f_2$ and  $\Dmodel = \emptyset$
\For{$k = 0,\dotsc,K-1$}
\State Sample minibatch $\Drealmini$ from dataset $\Dreal$ and 
 minibatch $\Dmodelmini$ from dataset $\Dmodel$.  
\State Construct transition tuples using model predictions  \label{ln:model_gen_data}
\begin{align*}
    \Dcal_M \coloneqq \left\{ (s,a, r_M, s_M'):  r_M = R_M(s,a), s_M'\sim P_M(\cdot \mid s,a), (s,a)  \in \Drealmini \cup \Dmodelmini  \right\} 
\end{align*} 

\State Update the adversary networks; for $i=1,2$, \label{ln:update critic}
\begin{gather}\label{eq:adversary_loss}
    \ladv(f,M) \coloneqq \Lcal_{\Dcal_M }(f,\pi, \piref) + \beta \left( \Ecal_{\Dcal_M}^{w}(f, M,\pi) + \lambda \Ecal_{\Drealmini}(M)  \right) \\
    M \gets  M - \eta_{\textrm{fast}} \left( \nabla_M \ladv(f_1, M) + \nabla_M \ladv (f_2, M) \right) \nonumber \\
    f_i \gets \text{Proj}_\Fcal(f_i - \eta_{\textrm{fast}}\nabla_{f_i} \ladv(f_i,M)) \quad\text{and}\quad 
    \bar{f}_i \gets (1-\tau) \bar{f}_i + \tau{f}_i \nonumber
\end{gather}
\State Update actor network with respect to the first critic network and the reference policy \label{ln:update actor}
\begin{gather}\label{eq:actor_loss}
    l^{\textrm{actor}}(\pi) \coloneqq - \Lcal_{\Dcal_M}(f_1,\pi, \piref)\\
    \pi \gets \text{Proj}_{\Pi}(\pi - \eta_{\textrm{slow}}\nabla_\pi l^{\textrm{actor}} (\pi)) \nonumber
\end{gather}
\State If $k \% H = 0$, then reset model state:  $\bar{S}_\pi \gets \{ s \in \Drealmini\}$ and $\bar{S}_{\piref} \gets \{ s \in \Drealmini\}$ \label{ln:reset_m_state}
\State Query the MDP model to expand $\Dmodel$ and update model state \label{ln:query}
\begin{gather*}
        \bar{A}_\pi \coloneqq \{ a : a \sim \pi(s) , s\in \bar{S}_\pi \} \quad \text{and}\quad 
        \bar{A}_{\piref} \coloneqq \{ a : a \sim \piref(s) , s\in \bar{S}_{\piref} \} \\
        \Dmodel \coloneqq \Dmodel \cup \{ \bar{S}_\pi, \bar{A}_\pi \} \cup \{ \bar{S}_{\piref}, \bar{A}_{\piref} \} \\
        \bar{S}_\pi \gets  \{  s' \mid s' \sim  \texttt{detach}( P_M(\cdot \mid s,a)), s \in  \bar{S}_\pi, a \in  \bar{A}_\pi \} \\
        \bar{S}_{\piref} \gets  \{  s' \mid s' \sim \texttt{detach}( P_M(\cdot \mid s,a)), s \in  \bar{S}_{\piref}, a \in  \bar{A}_{\piref} \}
\end{gather*}
\EndFor
\end{algorithmic}
\end{algorithm*}

\section{Practical Implementation}

\new{
In this section, we present a scalable implementation of \algo (\cref{alg:armor}) that approximately solves the two-player game in \Eqref{eq:def_pihat}. We first describe the overall design principle and then the algorithmic details.

\subsection{A Model-based Actor Critic Approach}
For computational efficiency, we take a model-based actor critic approach and solve a regularized version of  \Eqref{eq:def_pihat}. We construct this regularized version by relaxing the constraint $M\in\mathcal{M}_\alpha$ in the inner minimization of \Eqref{eq:def_pihat} to a regularization term and introducing an additional critic function. To clearly elaborate this, we first present the regularized objective in its complete form, and subsequently derive it from~\Eqref{eq:def_pihat}. 

Let $\mathcal{F}: \{ f:\mathcal{S}\times\mathcal{A} \to [0,V_{\max}] \}$ be a class of critic functions. The regularized objective is  given as
\begin{gather}
\label{eq:maxmin with Q and M (regularized)}
  \hspace{2mm} \tilde\pi \in \argmax_{\pi\in\Pi} \hspace{1mm}  \Lcal_{d_M^{\piref}}(\pi,f) \\
 \textrm{s.t.} \hspace{1mm} f^\pi \in  \argmin_{M\in\mathcal{M}, f\in\mathcal{F}} \hspace{1mm} \Lcal_{d_M^{\piref}}(\pi,f)+ \beta \left(  \Ecal_{\rho_{\piref,\pi}}(\pi,f,M) + \lambda \Ecal_{\Dcal}(M) \right) \nonumber 
\end{gather}
where $\Ecal_\Dcal(M) =  \sum_{ \Dcal}  - \log P_M(s' \mid s,a) + \nicefrac{(R_M(s,a) - r)^2 }{\Vmax^2}$ is the model-fitting error, $\Lcal_{d_M^{\piref}}(\pi,f) \coloneqq \E_{d_M^{\piref}}[  f(s,\pi) - f(s,\piref)  ]$ is equal to the performance difference $(1-\gamma) (J_M(\pi) - J_M(\piref))$, $\Ecal_{\rho_{\piref,\pi}}(\pi,f,M)$ denotes the squared Bellman error on the distribution $\rho_{\piref,\pi}$ 
that denotes the distribution generated by first running $\piref$ and then rolling out $\pi$ in $M$ (with a switching time sampled from a geometric distribution of $\gamma$), and $\beta, \lambda$ act as the Lagrange multipliers.

This regularized formulation in \Eqref{eq:maxmin with Q and M (regularized)} can be derived as follows. Assuming $Q_M^\pi \in \mathcal{F}$, and using the facts that $J_M(\pi) = \E_{d_0}[ Q_M^\pi(s,\pi)]$ and the Bellman equation $Q_M^\pi(s,a) =  r_M(s,a) +\gamma \Ebb_{s' \sim P_M(s,a)}[ Q_M^{\pi}(s', \pi) ] $, we can rewrite  \Eqref{eq:def_pihat} as 
\begin{gather}  \label{eq:maxmin with Q and M}
\max_{\pi \in \Pi} \min_{M\in\mathcal{M}, f\in\mathcal{F}} \quad  \E_{d_M^{\piref}} [ f(s, \pi) - f(s, \piref)] .\\
\text{s.t.} \quad  \Ecal_\Dcal(M)    \leq \alpha + \min_{M' \in \Mcal} \Ecal_{\Dcal}(M') \nonumber\\
 \forall s, a \in \text{supp}(\rho_{\piref,\pi}), \hspace{3mm}    f(s,a) = r_M(s,a) +\gamma \Ebb_{s' \sim P_M(s,a)}[ f(s', \pi) ] \nonumber
\end{gather}
We then convert the constraints in \Eqref{eq:maxmin with Q and M} into regularization terms in the inner minimization by introducing Lagrange multipliers ($\beta$, $\lambda$), following \citep{xie2021bellman,cheng2022adversarially}, and drop the constants not affected by $M,f,\pi$, which results in~\Eqref{eq:maxmin with Q and M (regularized)}.
}

\subsection{Algorithm Details}\label{subsec:alg_details}

\new{
\cref{alg:armor} is an iterative solver for approximating the solution to \Eqref{eq:maxmin with Q and M (regularized)}. Here we further approximate $d_M^{\piref}$ and $\rho_{\piref,\pi}$ in \cref{eq:maxmin with Q and M (regularized)} using samples from the state-action buffer $\Dcal_{\text{model}}$. We want ensure that $\Dcal_{\text{model}}$ has a larger coverage than both $d_M^{\piref}$ and $\rho_{\piref,\pi}$. We do so heuristically, by constructing the model replay buffer $\Dmodel$ through repeatedly rolling out $\pi$ and $\piref$ with the adversarially trained MDP model $M$, such that $\Dmodel$ contains a diverse training set of state-action tuples.}

Specifically, the algorithm takes as input an offline dataset $\Dreal$, a policy $\pi$, an MDP model $M$ and two critic networks $f_1, f_2$. 
At every iteration, the algorithm proceeds in two stages. First, the adversary is optimized to find a data-consistent model that minimizes the performance difference with the reference policy. We sample mini-batches of only states and actions $\Drealmini$ and $\Dmodelmini$ from the real and model-generated datasets respectively (\cref{ln:model_gen_data}). The MDP model $M$ is queried on these mini-batches to generate next-state and reward predictions. The adversary then updates the model and Q-functions (\cref{ln:update critic}) using the gradient of the loss described in \Eqref{eq:adversary_loss}, where
\begin{gather*}
\Lcal_{\Dcal_M}(f,\pi, \piref)  \coloneqq \mathbb{E}_{\Dcal_M}[f(s,\pi(s)) - f(s,\piref(s)] \\
\Ecal_{\Dcal_M}^{w}(f, M,\pi)  \coloneqq (1-w) \Ecal_{\Dcal}^{td}(f, f, M, \pi) + w \Ecal_{\Dcal}^{td}(f, \bar{f}, M, \pi) \\
\Ecal_{\Drealmini}(M)  \coloneqq 
\mathbb{E}_{\Drealmini}[ - \log P_M(s' \mid s,a) + \nicefrac{(R_M(s,a) - r)^2}{\Vmax^2}]
\end{gather*}
$\Lcal_{\Dcal_M}$ is the pessimistic loss term that forces the $f$ to predict a lower value for the learner than the reference on the sampled states. $\Ecal_{\Dcal_M}^{w}$ is the Bellman surrogate to encourage the Q-functions to be consistent with the model-generated data $\Dcal_M$. 
We use the double Q residual algorithm loss similar to~\citet{cheng2022adversarially}, which is defined as a convex combination of the temporal difference losses with respect to the critic and the delayed target networks, $\Ecal_{\Dcal}^{td}(f, f', M, \pi) := \E_{\Dcal}\left[(f(s,a) - r - \gamma f'(s', \pi ))^2\right]$. $\Ecal_\Dcal(M)$ is the model-fitting loss that ensures the model is data-consistent. $\beta$ and $\lambda$ control the effect of the pessimistic loss, by constraining Q-functions and models the adversary can choose. 
Once the adversary is updated, we update the policy (\cref{ln:update actor}) to maximize the pessimistic loss as defined in \cref{eq:actor_loss}. Similar to~\citet{cheng2022adversarially}, we choose one Q-function  and a slower learning rate for the policy updates ($\eta_\textrm{fast} \gg \eta_\textrm{slow}$).

We remark that $\Ecal_{\Dcal_M}^{w}$ not only affects $f_1, f_2$, but also $M$, i.e., it forces the model to generate transitions where the Q-function is Bellman consistent. This allows the pessimistic loss to indirectly affect the model learning, thus making the model adversarial. Consider the special case where $\lambda=0$ in the loss of \cref{ln:model_gen_data}. The model here is no longer forced to be data consistent, and the adversary can now freely update the model via $\Ecal_{\Dcal_M}^{w}$ such that the Q-function is always Bellman consistent. As a consequence, the algorithm becomes equivalent to IL on the model-generated states. We empirically study this behavior in our experiments (\cref{sec:experiments}).

\cref{ln:reset_m_state,ln:query} describe our model-based rollout procedure. We incrementally rollout both $\pi$ and $\piref$ from states in $\Drealmini$ for a horizon $H$, and add the generated transitions to $\Dmodel$. The aim of this strategy is to generate a distribution with large coverage for training the adversary and policy, and we discuss this in detail in the next section. 

Finally, it is important to note the fact that neither the pessimistic nor the Bellman surrogate losses uses the real transitions; hence our algorithm is completely model-based from a statistical point of view, that the value function $f$ is solely an intermediate variable that helps in-model optimization and not directly fit from data.

\begin{table*}[t]
  \centering
  \begin{adjustbox}{max width=\textwidth}
  \begin{tabular}{|c|c|c|c|c|c|c|c|c|c|c|}
    \hline
    Dataset & ARMOR & MoREL & MOPO & RAMBO & COMBO & ATAC & CQL & IQL & BC \\ 
    \hline
    hopper-med & \textbf{101.4} & \textbf{95.4} & 28.0 & \textbf{92.8} & \textbf{97.2} & 85.6 & 86.6 & 66.3 & 29.0 \\ 
    walker2d-med & \textbf{90.7} & 77.8 & 17.8 & \textbf{86.9} & \textbf{81.9} & \textbf{89.6} & 74.5 & 78.3 & 6.6 \\ 
    halfcheetah-med & 54.2 & 42.1 & 42.3 & \textbf{77.6} & 54.2 & 53.3 & 44.4 & 47.4 & 36.1 \\ 
    hopper-med-replay & \textbf{97.1} & \textbf{93.6} & 67.5 & \textbf{96.6} & 89.5 & \textbf{102.5} & 48.6 & \textbf{94.7} & 11.8 \\ 
    walker2d-med-replay & \textbf{85.6} & 49.8 & 39.0 & \textbf{85.0} & 56.0 & \textbf{92.5} & 32.6 & 73.9 & 11.3 \\ 
    halfcheetah-med-replay & 50.5 & 40.2 & 53.1 & \textbf{68.9} & 55.1 & 48.0 & 46.2 & 44.2 & 38.4 \\ 
    hopper-med-exp & \textbf{103.4} & \textbf{108.7} & 23.7 & 83.3 & \textbf{111.1} & \textbf{111.9} & \textbf{111.0} & 91.5 & \textbf{111.9} \\ 
    walker2d-med-exp & \textbf{112.2} & 95.6 & 44.6 & 68.3 & \textbf{103.3} & \textbf{114.2} & 98.7 & \textbf{109.6} & 6.4 \\ 
    halfcheetah-med-exp & \textbf{93.5} & 53.3 & 63.3 & \textbf{93.7} & \textbf{90.0} & \textbf{94.8} & 62.4 & \textbf{86.7} & 35.8 \\ 
    \hline
    pen-human & \textbf{72.8} & - & - & - & - & 53.1 & 37.5 & \textbf{71.5} & 34.4 \\ 
    hammer-human & 1.9 & - & - & - & - & 1.5 & \textbf{4.4} & 1.4 & 1.5 \\ 
    door-human & 6.3 & - & - & - & - & 2.5 & \textbf{9.9} & 4.3 & 0.5 \\ 
    relocate-human & \textbf{0.4} & - & - & - & - & 0.1 & 0.2 & 0.1 & 0.0 \\ 
    pen-cloned & \textbf{51.4} & - & - & - & - & 43.7 & 39.2 & 37.3 & \textbf{56.9} \\ 
    hammer-cloned & 0.7 & - & - & - & - & 1.1 & \textbf{2.1} & \textbf{2.1} & 0.8 \\ 
    door-cloned & -0.1 & - & - & - & - & \textbf{3.7} & 0.4 & 1.6 & -0.1 \\ 
    relocate-cloned & -0.0 & - & - & - & - & \textbf{0.2} & -0.1 & -0.2 & -0.1 \\ 
    pen-exp & 112.2 & - & - & - & - & \textbf{136.2} & 107.0 & - & 85.1 \\ 
    hammer-exp & \textbf{118.8} & - & - & - & - & \textbf{126.9} & 86.7 & - & \textbf{125.6} \\ 
    door-exp & \textbf{98.7} & - & - & - & - & \textbf{99.3} & \textbf{101.5} & - & 34.9 \\ 
    relocate-exp & \textbf{96.0} & - & - & - & - & \textbf{99.4} & \textbf{95.0} & - & \textbf{101.3} \\ 
    \hline
  \end{tabular}
  \end{adjustbox}
  \caption{Performance comparison of \algo against baselines on the D4RL datasets. The values for \algo denote last iteration performance averaged over 4 random seeds, and baseline values were taken from their respective papers. The values denote normalized returns based on random and expert policy returns similar to~\citet{fu2020d4rl}. Boldface denotes performance within $10\%$ of the best performing algorithm. We report results with standard deviations in \cref{appendix:experiments}.
} 
  \label{tab:benchmark_results}
  \vspace{-3mm}
\end{table*}

\vspace{-1mm}
\section{Experiments} \label{sec:experiments}
\vspace{-1mm}
We test the efficacy of \algo on two major fronts: (1) performance comparison to existing offline RL algorithms, and (2) robust policy improvement over a reference policy that is not covered by the dataset, a novel setting that is not applicable to existing works\footnote{In \cref{appendix:experiments} we empirically show how imitation learning can be obtained as a special case of \algo}. We use the D4RL~\citep{fu2020d4rl} continuous control benchmarks datasets for all our experiments and the code will be made public.

\vspace{-2mm}
\paragraph{Experimental Setup:} 
 We parameterize $\pi, f_1, f_2$ and $M$ using feedforward neural networks, and set $\eta_{\textrm{fast}} = 5e-4$, $ \eta_{\textrm{slow}} = 5e-7$, $w = 0.5 $ similar to~\citet{cheng2022adversarially}. 
 In all our experiments, we vary only the $\beta$ and $\lambda$ parameters which control the amount of pessimism; others are fixed. Importantly, we set the rollout horizon to be the max episode horizon defined in the environment. The dynamics model is pre-trained for 100k steps using model-fitting loss on the offline dataset. \algo is then trained for 1M steps on each dataset. Refer to \cref{appendix:experiments} for more details.

\vspace{-2mm}
\subsection{Comparison with Offline RL Baselines}\label{subsec:offline_rl_exps}
\vspace{-1mm}
By setting the reference policy to the behavior-cloned policy on the offline dataset, we can use \algo as a standard offline RL algorithm. \cref{tab:benchmark_results} shows a comparison of the performance of \algo against SoTA model-free and model-based offline RL baselines. In the former category, we consider ATAC~\citep{cheng2022adversarially}, CQL~\citep{kumar2020conservative} and IQL~\citep{kostrikov2021offline}, and for the latter we consider MoREL~\citep{kidambi2020morel}, MOPO~\citep{yu2020mopo}, and RAMBO~\citep{rigter2022rambo}. We also compare against COMBO~\citep{yu2021combo} which is a hybrid model-free and model-based algorithm. 
In these experiments, we initially warm start the optimization for 100k steps, by training the policy and Q-function using behavior cloning and temporal difference learning respectively on the offline dataset to ensure the learner policy is initialized to be the same as the reference.
Overall, we observe that \algo consistently outperforms or is competitive with the best baseline algorithm on most datasets. Specifically, compared to other purely model-based baselines (MoREL, MOPO and RAMBO), there is a marked increase in performance in the \textit{walker2d-med, hopper-med-exp} and \textit{walker2d-med-exp} datasets. We would like to highlight two crucial elements about \algo, in contrast to other model-based baselines - (1) \algo achieves SoTA performance using only a \emph{single} neural network to model the MDP, as opposed to complex network ensembles employed in previous model-based offline RL methods~\citep{kidambi2020morel, yu2021combo, yu2020mopo, rigter2022rambo}, and  (2) to the best of our knowledge, \algo is the only purely model-based offline RL algorithm that has shown performance comparable with model-free algorithms on the high-dimensional Adroit environments. 
The lower performance compared to RAMBO on \textit{halfcheetah-med} and \textit{halfcheetah-med-replay} may be attributed to that the much larger computational budget used by RAMBO  is required for convergence on these datasets.

\begin{figure}
     \centering
     \begin{subfigure}[b]{\textwidth}
         \centering
         \includegraphics[trim={0cm, 0, 0cm, 0}, clip, width=\textwidth]{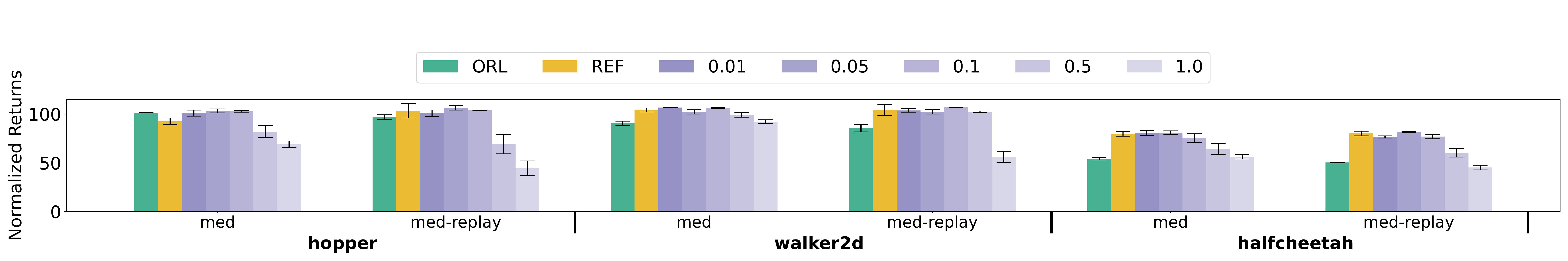}
     \end{subfigure}
     \begin{subfigure}[b]{\textwidth}
         \centering
         \includegraphics[trim={0cm, 0, 0cm, 0cm}, clip, width=\textwidth]{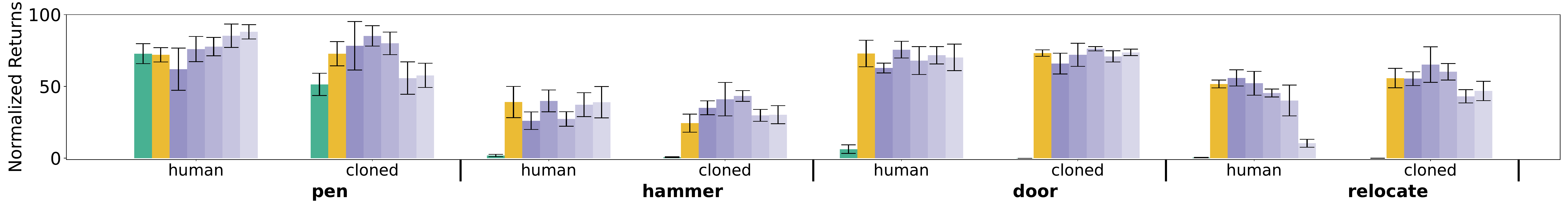}
     \end{subfigure}
    \caption{Verification of RPI over the reference policy for different $\beta$ (purple). ORL denotes the performance of offline RL with ARMOR ( Table~\ref{tab:benchmark_results}), and REF is the performance of reference policy.\protect\footnotemark \vspace{-4mm}}
    \label{fig:rpi_results}
\end{figure}
\footnotetext{The variation in performance of the reference for different dataset qualities in the same environment is owing to different random seeds.}
\vspace{-2mm}
\subsection{Robust Policy Improvement}
\vspace{-1mm}
Next, we test whether the practical version of \algo demonstrates RPI of the theoretical version. We consider \new{a set of 14 datasets comprised of} the \textit{medium} and \textit{medium-replay} versions of D4RL locomotion tasks, as well as the \textit{human} and \textit{cloned} versions of the Adroit tasks, with the reference policy set to be the stochastic behavior cloned policy on the expert dataset. We chose these combinations of dataset quality and reference, to ensure that the reference policy takes out-of-distribution actions with respect to the data. Unlike Sec.~\ref{subsec:offline_rl_exps} 
here the reference policy is a black-box given as a part of the problem definition. This opens the question of how the learner should be initialized, since we can not trivially initialize the learner to be the reference as in the previous experiments.\footnote{\new{In \cref{subsec:more_rpi} we provide further experiments for different choices of reference policies.}}
In a similar spirit to Sec.~\ref{subsec:offline_rl_exps}, one might consider initializing the learner close to the reference by behavior cloning the reference policy on the provided dataset during warmstart, i.e, by replacing the dataset actions with reference actions. However, when the reference chooses out of support actions, this procedure will not provide a good global approximation of the reference policy, which can make the optimization problem harder. Instead, we propose to learn a residual policy where the learned policy outputs an additive correction to the reference~\citep{silver2018residual}. This is an appropriate choice since \algo does not make any restrictive assumptions about the structure of the policy class. 
\cref{fig:rpi_results} shows the normalized return achieved by \algo for different $\beta$, with fixed values for remaining hyperparameters. 
We observe that \algo is able to achieve performance comparable or better than the reference policy for a range of $\beta$ values uniformly across all datasets, thus verifying the RPI property in practice. Specifically, there is significant improvement via RPI in the \textit{hammer}, \textit{door} and \textit{relocate} domains, where running ARMOR as a pure offline RL algorithm(\cref{subsec:offline_rl_exps}) does not show any progress~\footnote{We provide comparisons when using a behavior cloning initialization for the learner in \cref{appendix:experiments}.}.
\new{Overall, we note the following metrics:
\begin{itemize}
    \item In 14/14 datasets, ARMOR shows RPI (i.e., ARMOR policy is no worse than the reference when measured by overlap of confidence intervals). Further, considering the difference between ORL and REF as a rough indication of whether the reference is within data support, we note that in 12/14 cases REF is strictly better than ORL, and in all those cases ARMOR demonstrates RPI.
    \item In 5/14 datasets, the ARMOR policy is strictly better than the reference. (Criterion: the lower confidence of ARMOR performance is better than upper confidence of REF). It is important to note that this metric is highly dependent on the quality of the reference policy. Since the reference is near-expert, it can be hard for some environments to improve significantly over it.
\end{itemize}
}

\vspace{-1mm}
\section{Discussion } \label{sec:RPI}
\vspace{-1mm}

The RPI of \algo is highly valuable as it allows easy tuning of the pessimism hyperparameter without performance degradation. We believe that leveraging this property can pave the way for real-world deployment of offline RL. Thus, we next present a discussion of RPI.\footnote{Due to space limit, we defer the complete discussion to \cref{appendix:rpi_deeper} and only provide salient points here.}

\begin{center}
    \textit{{When does RPI actually improve over the reference policy?}}
\end{center}

Given \algo's ability to improve over an arbitrary policy, the following question naturally arises:
{
Can \algo nontrivially improve the output policy of other offline algorithms, including itself?}

If this were true, can we repeatedly run \algo to improve over itself and obtain the \emph{best} policy any algorithm can learn offline?
Unfortunately, the answer is negative. Not only can \algo not improve over itself, but it also cannot improve over a variety of algorithms (e.g., absolute pessimism or minimax regret). In fact, the optimal policy of an \textit{arbitrary} model in the version space $\Mcal_\alpha$ is provably unimprovable ( \cref{th:fixed-point examples}; \cref{appendix:rpi_deeper}). 
With a deep dive into when RPI gives nontrivial improvement (\cref{appendix:rpi_deeper}), we found some interesting observations, which we highlight here.

\vspace{-2mm}
\paragraph{Return maximization and regret minimization are \textit{different} in offline RL} 
These objectives generally produce different policies, even though they are equivalent in online RL. Their equivalence in online RL relies on the fact that online exploration can eventually resolve any uncertainty. In offline RL with an arbitrary data distribution, there will generally be model uncertainty that cannot be resolved, and the worst-case reasoning over such model uncertainty (i.e., $\Mcal_\alpha$) leads to definitions that are no longer equivalent. 
Moreover, it is impossible to  compare  return maximization and regret minimization and make a claim about which  is better. \emph{They are not simply an algorithm design choice, but are definitions of the learning goals and guarantees themselves}---and are thus incomparable: if we care about obtaining a guarantee for the worst-case {return}, the return maximization is optimal by definition; if we are more interested in a guarantee for the worst-case {regret}, then regret minimization is optimal. 
We also note that analyzing algorithms under a metric that is different from the one they are designed for can lead to unusual conclusions, e.g., \citet{xiao2021optimality} show that optimistic/neutral/pessimistic algorithms are equally minimax-optimal in terms of their regret guarantees in offline multi-armed bandits. However, the algorithms they consider are optimistic/pessimistic with respect to the \emph{return} (as commonly considered in the offline RL literature) not the \emph{regret} which is the performance metric they are interested in analyzing. 

\vspace{-2mm}
\paragraph{$\piref$ is more than a hyperparameter---it defines the performance metric and learning goal} \cref{th:fixed-point examples} in \cref{appendix:rpi_deeper} shows that \algo has many different fixed points: when $\piref$ is chosen from these fixed points, the solution to \Eqref{eq:def_pihat} is also $\piref$. Furthermore, some of them may seem quite unreasonable for offline learning (e.g., the greedy policy to an arbitrary model in $\Mcal_\alpha$ or even the optimistic policy). This is not a defect of the algorithm. Rather, because of the unresolvable uncertainty in the offline setting, there are many different performance metrics/learning goals that are generally incompatible/incomparable, and the agent designer must make a conscious choice among them and convey the intention to the algorithm. In \algo, such a choice is explicitly conveyed by  $\piref$, which makes \algo subsume return maximization and regret minimization as special cases.
\vspace{-2mm}
\section{Conclusion}
\vspace{-2mm}
We have presented a model-based offline RL framework, \algo, that can improve over arbitrary reference policies regardless of data coverage, by using the concept of relative pessimism. \algo provides strong theoretical guarantees with general function approximators, and exhibits robust policy improvement over the reference policy for a wide range of hyper-parameters. We have also presented a scalable deep learning instantiation of the theoretical algorithm. Empirically, we demonstrate that \algo indeed enjoys the RPI property, and has competitive performance with several SoTA model-free and model-based offline RL algorithms, while employing a simpler model architecture (a single  MDP model) than other model-based baselines that rely on ensembles. This also opens the opportunity to leverage high-capacity world models~\citep{hafner2023mastering} with offline RL in the future. However, there are also some \textbf{limitations}. 
While RPI holds for the pessimism parameter, the others still need to be tuned.
\new{In practice, the non-convexity of the optimization can also make solving the two-player game challenging. For instance, if the adversary is not strong enough (i.e., far from solving the inner minimization), RPI would break.}
Further, runtime of \algo is slightly slower than model-free algorithms owing to extra computations for model rollouts. 

\begin{ack}
    Nan Jiang acknowledges funding support from NSF IIS-2112471 and NSF CAREER IIS-214178.
\end{ack}


\bibliographystyle{plainnat}
\bibliography{main}



\clearpage

\appendix
\onecolumn

\allowdisplaybreaks


\section{Proofs for \creftitle{sec:algo}}
\label{app:allproofs}

\subsection{Technical Tools}

\begin{lemma}[Simulation lemma]
\label{lem:simulation}
Consider any two MDP model $M$ and $M'$, and any $\pi: \Scal \to \Delta(\Acal)$, we have
\begin{gather*}
\left| J_M(\pi) - J_{M'}(\pi) \right| \leq \frac{\Vmax}{1 - \gamma} \E_{d^\pi} \left[ \TV\left(P_M(\cdot \mid s,a), P_{M'}(\cdot \mid s,a)\right) \right] + \frac{1}{1 - \gamma} \E_{d^\pi} \left[ \left|R_M(s,a) - R_{M'}(s,a)\right| \right].
\end{gather*}
\end{lemma}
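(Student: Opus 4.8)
The plan is to relate the return gap to a single telescoping (value-difference) identity and then bound the two resulting contributions separately. Writing the fixed-policy Bellman equation $V_M^\pi = r_M^\pi + \gamma \mathcal{P}_M^\pi V_M^\pi$, where $r_M^\pi(s) = \E_{a\sim\pi(s)}[R_M(s,a)]$ and $\mathcal{P}_M^\pi$ denotes the policy-induced transition operator on $M$, I would subtract the analogous equation for $M'$ and add and subtract $\gamma \mathcal{P}_{M'}^\pi V_M^\pi$ (equivalently, apply the resolvent identity $(I-\gamma\mathcal{P}_M^\pi)^{-1} - (I-\gamma\mathcal{P}_{M'}^\pi)^{-1} = \gamma(I-\gamma\mathcal{P}_M^\pi)^{-1}(\mathcal{P}_M^\pi-\mathcal{P}_{M'}^\pi)(I-\gamma\mathcal{P}_{M'}^\pi)^{-1}$). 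Taking the expectation over $d_0$ then yields the exact decomposition
\[
J_M(\pi) - J_{M'}(\pi) = \frac{1}{1-\gamma}\E_{(s,a)\sim d_M^\pi}\Big[ R_M(s,a) - R_{M'}(s,a) + \gamma\big( \E_{P_M(\cdot\mid s,a)}[V_{M'}^\pi] - \E_{P_{M'}(\cdot\mid s,a)}[V_{M'}^\pi]\big)\Big],
\]
where the factor $\tfrac{1}{1-\gamma}$ and the occupancy $d_M^\pi$ arise from collapsing the resolvent $(I-\gamma\mathcal{P}_M^\pi)^{-1}$ against the initial distribution $d_0$. By symmetry one could instead keep $V_M^\pi$ inside and obtain $d_{M'}^\pi$, so either choice matches the generic $d^\pi$ in the statement.

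Next I would apply the triangle inequality to the identity. The reward term is immediately bounded by $\tfrac{1}{1-\gamma}\E_{d_M^\pi}[|R_M - R_{M'}|]$, which is one of the two desired terms. For the transition term, the inner quantity is the integral $\int V_{M'}^\pi(s')\,(P_M - P_{M'})(ds'\mid s,a)$ against the signed measure $P_M - P_{M'}$; since $V_{M'}^\pi$ takes values in $[\Vmin,\Vmax]=[0,\Vmax]$, I would subtract the midpoint constant (which integrates to zero against a difference of probability measures) and use the $\ell_1$/$\ell_\infty$ bound to get $\big|\cdot\big| \le \Vmax\,\TV\!\left(P_M(\cdot\mid s,a), P_{M'}(\cdot\mid s,a)\right)$. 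Combining this with $\gamma \le 1$ produces the coefficient $\tfrac{\Vmax}{1-\gamma}$ and closes the argument.

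The two points to be careful about are: (i) confirming $V_{M'}^\pi \in [0,\Vmax]$ so that the effective width after centering is $\Vmax$ rather than $2\Vmax$ — this is exactly why one drops the constant before integrating against the signed measure; and (ii) keeping track of the inherent asymmetry in $M,M'$, i.e., the occupancy belongs to whichever model's resolvent is expanded, which is consistent with the unsubscripted $d^\pi$ in the statement. The main obstacle is just pinning down these constants correctly; the telescoping identity itself is routine, and no concentration or realizability assumptions are needed since the lemma is a purely deterministic comparison of two fixed models under a fixed policy.
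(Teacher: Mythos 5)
Your proposal is correct: the telescoping value-difference identity, the $\tfrac{1}{1-\gamma}$/occupancy collapse of the resolvent, and the midpoint-centering step that turns the signed-measure integral into $\Vmax\cdot\TV$ (rather than $2\Vmax\cdot\TV$) together give exactly the stated bound, and your handling of the $M$-vs-$M'$ asymmetry in $d^\pi$ is the right reading of the unsubscripted occupancy. The paper does not prove this lemma itself but defers to \citet[Lemma 7]{uehara2021pessimistic}, whose argument is the same standard telescoping decomposition you describe, so your route coincides with the intended one.
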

\cref{lem:simulation} is the standard simulation lemma in model-based reinforcement learning literature, and its proof can be found in, e.g., \citet[Lemma 7]{uehara2021pessimistic}.

\subsection{MLE Guarantees}

We use $\ell_\Dcal(M)$ to denote the likelihood of model $M = (P,R)$ with offline data $\Dcal$, where
\begin{align}
\label{eq:def_LD}
\ell_\Dcal(M) = &~ \prod_{(s,a,r,s') \in \Dcal} P_M(s' \mid s,a).
\end{align}

For the analysis around maximum likelihood estimation, we largely follow the proving idea of~\citet{agarwal2020flambe,liu2022partially}, which is inspired by~\citet{zhang2006eps}.

The next lemma shows that the ground truth model $M^\star$ has a comparable log-likelihood compared with MLE solution.
\begin{lemma}
\label{lem:MLE_Mstar}
Let $M^\star$ be the ground truth model. Then, with probability at least $1 - \delta$, we have
\begin{align*}
\max_{M \in \Mcal} \log \ell_{\Dcal}(M) - \log \ell_{\Dcal}(M^\star) \leq \log(\nicefrac{|\Mcal|}{\delta}).
\end{align*}
\end{lemma}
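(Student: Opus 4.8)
The plan is to control, for each fixed $M \in \Mcal$, the likelihood ratio $L_M \coloneqq \ell_\Dcal(M)/\ell_\Dcal(M^\star)$, and then turn a high-probability bound on $L_M$ into the claimed log-likelihood gap by a union bound over the finite class $\Mcal$. The entire argument rests on the single observation that $\mathbb{E}[L_M] = 1$, which is the change-of-measure statement that a likelihood ratio integrates to one.

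First I would compute this expectation. Writing $\Dcal = \{(s_n,a_n,r_n,s_{n+1})\}_{n=1}^N$ with $(s_n,a_n)\sim\mu$ and $s_{n+1}\sim P_{M^\star}(\cdot \mid s_n,a_n)$, the definition in \Eqref{eq:def_LD} gives $L_M = \prod_{n=1}^N P_M(s_{n+1}\mid s_n,a_n)/P_{M^\star}(s_{n+1}\mid s_n,a_n)$. Conditioning on $(s_n,a_n)$ and integrating over $s_{n+1}$, each factor has conditional expectation $\mathbb{E}_{s'\sim P_{M^\star}(\cdot\mid s,a)}[P_M(s'\mid s,a)/P_{M^\star}(s'\mid s,a)] = \sum_{s'} P_M(s'\mid s,a) = 1$; note the event $P_{M^\star}(s'\mid s,a)=0$ has probability zero under $s'\sim P_{M^\star}(\cdot\mid s,a)$, so the ratio is almost surely well defined. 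Chaining the per-sample conditional expectations via the tower property (which also covers the case where the samples are not fully independent, provided each $s_{n+1}$ is drawn from $P_{M^\star}$ given its context) yields $\mathbb{E}[L_M] = 1$.

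Next I would apply Markov's inequality: since $L_M \geq 0$ and $\mathbb{E}[L_M] = 1$, we get $\Pr[L_M \geq |\Mcal|/\delta] \leq \delta/|\Mcal|$. A union bound over the $|\Mcal|$ models shows that, with probability at least $1-\delta$, $L_M \leq |\Mcal|/\delta$ holds simultaneously for all $M \in \Mcal$. Taking logarithms, this reads $\log \ell_\Dcal(M) - \log\ell_\Dcal(M^\star) \leq \log(|\Mcal|/\delta)$ for every $M$, and in particular for the maximizer, which is exactly the claim.

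Since this is a standard first-moment MLE deviation bound, I do not anticipate a genuine obstacle; the only points that need care are the measure-theoretic well-definedness of the ratio (handled by $s'\sim P_{M^\star}$ charging no mass to $\{P_{M^\star}=0\}$) and, for non-i.i.d. data, phrasing the computation as a telescoping product of conditional expectations rather than a product of marginals. I would remark that the sharper Hellinger/TV guarantee needed later to relate $M$ to $\Ecal^\star(M)$ requires the $\sqrt{L_M}$ refinement of this argument, but for the present one-sided log-likelihood bound the plain $\mathbb{E}[L_M]=1$ calculation suffices.
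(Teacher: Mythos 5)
Your proposal is correct and takes essentially the same route as the paper: both arguments rest on showing $\E[\ell_\Dcal(M)/\ell_\Dcal(M^\star)] = 1$ via the tower property (conditioning on $(s,a)$ and integrating $s'$ against $P_{M^\star}$), followed by Markov's inequality and a union bound over the finite class $\Mcal$. The only cosmetic difference is that you bound the likelihood ratio $L_M$ directly by $|\Mcal|/\delta$ with per-model failure probability $\delta/|\Mcal|$, whereas the paper phrases the identical step as a Chernoff-style tail bound on the log-likelihood gap exceeding $\log(\nicefrac{1}{\delta})$ before union-bounding.
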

\begin{proof}[\cpfname{lem:MLE_Mstar}]

The proof of this lemma is obtained by a standard argument of MLE~\citep[see, e.g.,][]{van2000empirical}. For any $M \in \Mcal$,
\begin{align}
\nonumber
\E \left[ \exp \left( \log \ell_{\Dcal}(M) - \log \ell_{\Dcal}(M^\star) \right) \right]
= &~ \E \left[ \frac{\ell_{\Dcal}(M)}{\ell_{\Dcal}(M^\star)} \right]
\\
\nonumber
= &~ \E \left[ \frac{\prod_{(s,a,r,s') \in \Dcal} P_{M}(s' \mid s,a)}{\prod_{(s,a,r,s') \in \Dcal} P_{M^\star}(s' \mid s,a)} \right]
\\
\nonumber
= &~ \E \left[ \prod_{(s,a,r,s') \in \Dcal} \frac{P_{M}(s' \mid s,a)}{P_{M^\star}(s' \mid s,a)} \right]
\\
\nonumber
= &~ \E \left[ \prod_{(s,a) \in \Dcal} \E \left[ \frac{P_{M}(s' \mid s,a)}{P_{M^\star}(s' \mid s,a)} \midmid s,a \right] \right]
\\
\nonumber
= &~ \E \left[ \prod_{(s,a) \in \Dcal} \sum_{s'} P_{M}(s' \mid s,a)  \right]
\\
\label{eq:markov_eq_exp}
= &~ 1.
\end{align}
Then by Markov's inequality, we obtain
\begin{align*}
&~ \PP\left[\left( \log \ell_{\Dcal}(M) - \log \ell_{\Dcal}(M^\star) \right) > \log(\nicefrac{1}{\delta})\right]
\\
\leq &~ \underbrace{\E \left[ \exp \left( \log \ell_{\Dcal}(M) - \log \ell_{\Dcal}(M^\star) \right) \right]}_{=1\text{ by \Eqref{eq:markov_eq_exp}}} \cdot \exp\left[ - \log(\nicefrac{1}{\delta}) \right] = \delta.
\end{align*}
Therefore, taking a union bound over $\Mcal$, we obtain
\begin{align*}
\PP\left[\left( \log \ell_{\Dcal}(M) - \log \ell_{\Dcal}(M^\star) \right) > \log(\nicefrac{|\Mcal|}{\delta}) \right] \leq \delta.
\end{align*}
This completes the proof.
\end{proof}

The following lemma shows that, the on-support error of any model $M \in \Mcal$ can be captured via its log-likelihood (by comparing with the MLE solution).

\begin{lemma}
\label{lem:MLE_bound_TV}
For any model $M$, we have with probability at least $1 - \delta$,
\begin{align*}
\E_\mu \left[ \TV\left(P_M(\cdot \mid s,a), P_\Mtrue(\cdot \mid s,a)\right)^2 \right] \leq \Ocal \left( \frac{\log\ell_{\Dcal}(M^\star) - \log\ell_\Dcal(M) + \log(\nicefrac{|\Mcal|}{\delta})}{n} \right),
\end{align*}
where $\ell_{\Dcal}(\cdot)$ is defined in \Eqref{eq:def_LD}.
\end{lemma}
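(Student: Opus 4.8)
The plan is to route the argument through the squared Hellinger distance, which is the natural metric for maximum-likelihood estimation and is controlled by the log-likelihood gap, and only convert to total variation at the very end. Writing $H^2(P,Q) = 1 - \sum_{s'}\sqrt{P(s')Q(s')}$ for the squared Hellinger distance, the pointwise inequality $\TV(P,Q)^2 \leq 2\,H^2(P,Q)$ holds for every $(s,a)$, so taking $\E_\mu$ reduces the claim to proving the same bound with $\E_\mu\!\left[\TV(P_M,P_\Mtrue)^2\right]$ replaced by $\E_\mu\!\left[H^2(P_M(\cdot\mid s,a),P_\Mtrue(\cdot\mid s,a))\right]$, absorbing the constant $2$ into the $\Ocal(\cdot)$. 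I would state this reduction first.

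The heart of the argument is an exponential-moment computation, in the style of the MLE analyses of \citet{zhang2006eps,agarwal2020flambe}. For a fixed $M$, consider the nonnegative random variable $\exp\bigl(-\tfrac12(\log\ell_\Dcal(M^\star) - \log\ell_\Dcal(M))\bigr) = \prod_{(s,a,s')\in\Dcal}\sqrt{P_M(s'\mid s,a)/P_\Mtrue(s'\mid s,a)}$. Conditioning on each $(s,a)$ and taking the inner expectation over $s'\sim P_\Mtrue(\cdot\mid s,a)$ collapses each factor to the Bhattacharyya affinity $\sum_{s'}\sqrt{P_M(s'\mid s,a)\,P_\Mtrue(s'\mid s,a)} = 1 - H^2(P_M(\cdot\mid s,a),P_\Mtrue(\cdot\mid s,a))$. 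Using that the $n$ samples are independent draws with $(s,a)\sim\mu$ and $s'\sim M^\star$, the full expectation factorizes into $\prod(1 - \E_\mu[H^2]) = (1 - \E_\mu[H^2])^n \leq \exp(-n\,\E_\mu[H^2])$ via $1 - x \leq e^{-x}$. This yields the key estimate $\E\bigl[\exp(-\tfrac12(\log\ell_\Dcal(M^\star) - \log\ell_\Dcal(M)) + n\,\E_\mu[H^2])\bigr] \leq 1$.

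With the exponential moment in hand, I would apply Markov's inequality to $X_M := -\tfrac12(\log\ell_\Dcal(M^\star) - \log\ell_\Dcal(M)) + n\,\E_\mu[H^2]$: since $\E[e^{X_M}]\leq 1$, we get $\PP[X_M > \log(\nicefrac{|\Mcal|}{\delta})] \leq \nicefrac{\delta}{|\Mcal|}$, and a union bound over the finite class $\Mcal$ makes this hold simultaneously for all $M$ with probability at least $1-\delta$. Rearranging gives $n\,\E_\mu[H^2] \leq \tfrac12(\log\ell_\Dcal(M^\star) - \log\ell_\Dcal(M)) + \log(\nicefrac{|\Mcal|}{\delta})$; combining with the $\TV^2 \leq 2H^2$ reduction from the first step then delivers the stated bound, with the factor $2$ and the $\tfrac12$ swept into $\Ocal(\cdot)$.

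The main obstacle is the exponential-moment step: getting the conditioning and decoupling right so that each per-sample factor collapses to the Bhattacharyya affinity, and correctly invoking independence so the product factorizes over the $n$ draws. (The offline data is modeled as i.i.d. samples with $(s,a)\sim\mu$ and successor $s'\sim M^\star$; absent this, one would replace the factorization with a supermartingale version of the same computation.) The remaining ingredients---the pointwise $\TV^2 \leq 2H^2$ inequality, Markov's inequality, and the union bound over $\Mcal$---are routine once the exponential moment is established.
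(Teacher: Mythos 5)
Your argument is correct and arrives at the paper's bound, but it is packaged somewhat differently. The paper first invokes \citet[Lemma 25]{agarwal2020flambe} to bound $\E_\mu\left[\TV\left(P_M(\cdot\mid s,a),P_\Mtrue(\cdot\mid s,a)\right)^2\right]$ by $-2\log\E_{\mu\times P_\Mtrue}\left[\exp\left(-\tfrac12\log\left(P_\Mtrue/P_M\right)\right)\right]$ --- which is the integrated form of your pointwise $\TV^2\le 2H^2$ step combined with $-\log(1-x)\ge x$ --- and then routes through a ghost dataset $\tilde{\Dcal}$ and the decoupling inequality of \citet[Lemma 24]{agarwal2020flambe} before applying the Chernoff method and a union bound. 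You instead compute the exponential moment of the halved log-likelihood gap directly: conditioning on $(s,a)$ collapses each factor to the Bhattacharyya affinity $1-H^2$, and the i.i.d.\ structure of $\Dcal$ factorizes the product into $(1-\E_\mu[H^2])^n\le \exp(-n\,\E_\mu[H^2])$, after which Markov's inequality and the union bound finish. The two computations are the same at their core (both are the Zhang-style MLE analysis the paper cites), but yours is more self-contained and elementary, trading away the extra generality of the decoupling lemma --- which also covers adaptively collected data via the supermartingale variant you correctly flag as the fallback --- for a shorter proof that leans on the paper's i.i.d.\ sampling model. The resulting constant, $2\log(\nicefrac{|\Mcal|}{\delta})$ after clearing the factor of $2$ from $\TV^2\le 2H^2$, matches the paper's up to the stated $\Ocal(\cdot)$.
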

\begin{proof}[\cpfname{lem:MLE_bound_TV}]
By~\citet[Lemma 25]{agarwal2020flambe}, we have
\begin{align}
\label{eq:P_tvnorm}
\E_\mu \left[ \TV\left(P_M(\cdot \mid s,a), P_\Mtrue(\cdot \mid s,a)\right)^2 \right] \leq &~ -2 \log \E_{\mu \times P_\Mtrue} \left[ \exp \left( - \frac{1}{2} \log \left( \frac{P_\Mtrue(s' \mid s,a)}{P_M(s' \mid s,a)} \right)\right) \right],
\end{align}
where $\mu \times P_\Mtrue$ denote the ground truth offline joint distribution of $(s,a,s')$.

Let $\wt\Dcal = \{(\wt s_i,\wt a_i,\wt r_i,\wt s_i')\}_{i = 1}^{n} \sim \mu$ be another offline dataset that is independent to $\Dcal$. Then,
\begin{align}
\nonumber
&~ - n \cdot \log \E_{\mu \times P_\Mtrue} \left[ \exp \left( - \frac{1}{2} \log \left( \frac{P_\Mtrue(s' \mid s,a)}{P_M(s' \mid s,a)} \right)\right) \right]
\\
\nonumber
= &~ - \sum_{i = 1}^{n} \log \E_{(\wt s_i,\wt a_i,\wt s_i') \sim \mu} \left[ \exp \left( - \frac{1}{2} \log \left( \frac{P_\Mtrue(\wt s_i' \mid \wt s_i,\wt a_i)}{P_M(\wt s_i' \mid \wt s_i,\wt a_i)} \right)\right) \right]
\\
\nonumber
= &~ -\log \E_{\wt\Dcal \sim \mu} \left[ \exp \left( \sum_{i = 1}^{n}  - \frac{1}{2} \log \left( \frac{P_\Mtrue(\wt s_i' \mid \wt s_i,\wt a_i)}{P_M(\wt s_i' \mid \wt s_i,\wt a_i)} \right) \right) \midmid \Dcal \right]
\\
\label{eq:P_tvnorm_ub}
= &~ -\log \E_{\wt\Dcal \sim \mu} \left[ \exp \left( \sum_{(s,a,s') \in \wt\Dcal}  - \frac{1}{2} \log \left( \frac{P_\Mtrue(s' \mid s,a)}{P_M(s' \mid s,a)} \right) \right) \midmid \Dcal \right].
\end{align}
We use $\ell_M(s,a,s')$ as the shorthand of $- \frac{1}{2} \log \left( \frac{P_\Mtrue(s' \mid s,a)}{P_M(s' \mid s, a)} \right)$, for any $(s,a,s') \in \Scal \times \Acal \times \Scal$.
By~\citet[Lemma 24]{agarwal2020flambe}~\citep[see also][Lemma 15]{liu2022partially}, we know
\begin{align*}
\E_{\Dcal\sim\mu}\left[\exp \left( \sum_{(s,a,s') \in \Dcal}  \ell_M(s, a, s') - \log \E_{\wt\Dcal \sim \mu} \left[ \exp \left( \sum_{(s,a,s') \in \wt\Dcal}  \ell_M(s, a, s') \right) \midmid \Dcal \right] - \log|\Mcal|\right)\right] \leq 1.
\end{align*}

Thus, we can use Chernoff method as well as a union bound on the equation above to obtain the following exponential tail bound: with probability at least $1 - \delta$, we have for all $(P,R) = M \in \Mcal$,
\begin{align}
\label{eq:P_tvnorm_tail}
- \log \E_{\wt\Dcal \sim \mu} \left[ \exp \left( \sum_{(s,a,s') \in \wt\Dcal}  \ell_M(s, a, s') \right) \midmid \Dcal \right] \leq -\sum_{(s,a,s') \in \Dcal}  \ell_M(s, a, s') + 2\log(\nicefrac{|\Mcal|}{\delta}).
\end{align}
Plugging back the definition of $\ell_M$ and combining \Eqref{eq:P_tvnorm,eq:P_tvnorm_ub,eq:P_tvnorm_tail}, we obtain
\begin{align*}
n \cdot \E_\mu \left[ \TV\left(P(\cdot \mid s,a), P_\Mtrue(\cdot \mid s,a)\right)^2 \right] \leq &~ \frac{1}{2} \sum_{(s,a,s') \in \Dcal}  \log \left( \frac{P_\Mtrue(s' \mid s,a)}{P(s' \mid s, a)} \right) + 2\log(\nicefrac{|\Mcal|}{\delta}).
\end{align*}
Therefore, we obtain
\begin{align*}
&~ n \cdot \E_\mu \left[ \TV\left(P(\cdot \mid s,a), P_\Mtrue(\cdot \mid s,a)\right)^2 \right]
\\
\lesssim &~ \sum_{(s,a,s') \in \Dcal}  \log \left( \frac{P_\Mtrue(s' \mid s,a)}{P(s' \mid s, a)} \right) + \log(\nicefrac{|\Mcal|}{\delta})
\\
= &~ \log\ell_{\Dcal}(M^\star) - \log\ell_\Dcal(M) + \log(\nicefrac{|\Mcal|}{\delta}).
\tag{$\ell_{\Dcal}(\cdot)$ is defined in \Eqref{eq:def_LD}}
\end{align*}
This completes the proof.
\end{proof}

\subsection{Guarantees about Model Fitting Loss}

\begin{lemma}
\label{lem:Vspace_Mstar}
Let $M^\star$ be the ground truth model. Then, with probability at least $1 - \delta$, we have
\begin{align*}
\Ecal_{\Dcal}(M^\star) - \min_{M \in \Mcal} \Ecal_{\Dcal}(M)   \leq \Ocal \left( \log(\nicefrac{|\Mcal|}{\delta}) \right),
\end{align*}
where $\Ecal_\Dcal$ is defined in \cref{eq:def_loss}.
\end{lemma}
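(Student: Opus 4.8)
The plan is to decompose the model-fitting loss into its transition (negative log-likelihood) and reward (squared-error) parts and to control each separately, then recombine. Write $\Ecal_\Dcal(M) = \Ecal^P_\Dcal(M) + \Ecal^R_\Dcal(M)$, where $\Ecal^P_\Dcal(M) = -\sum_\Dcal \log P_M(s'\mid s,a) = -\log \ell_\Dcal(M)$ is the transition term and $\Ecal^R_\Dcal(M) = \sum_\Dcal (R_M(s,a) - r)^2/\Vmax^2$ is the reward term, with $\ell_\Dcal$ as in \Eqref{eq:def_LD}. Since the minimum of a sum is at least the sum of the minima, $\min_{M}\Ecal_\Dcal(M) \geq \min_M \Ecal^P_\Dcal(M) + \min_M \Ecal^R_\Dcal(M)$, and hence
\[
\Ecal_\Dcal(M^\star) - \min_M \Ecal_\Dcal(M) \leq \left(\Ecal^P_\Dcal(M^\star) - \min_M \Ecal^P_\Dcal(M)\right) + \left(\Ecal^R_\Dcal(M^\star) - \min_M \Ecal^R_\Dcal(M)\right).
\]
It then suffices to bound each bracket by $\Ocal(\log(\nicefrac{|\Mcal|}{\delta}))$ on a high-probability event and take a union bound (rescaling $\delta$ to $\delta/2$, which only changes the constant inside the logarithm).

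First I would dispatch the transition bracket. By definition $\Ecal^P_\Dcal(M^\star) - \min_M \Ecal^P_\Dcal(M) = \max_M \log \ell_\Dcal(M) - \log \ell_\Dcal(M^\star)$, which is exactly the quantity bounded by $\log(\nicefrac{|\Mcal|}{\delta})$ in \cref{lem:MLE_Mstar} with probability at least $1 - \delta/2$. So this bracket is controlled immediately.

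The reward bracket is the main work. If rewards are deterministic, i.e. $r = R^\star(s,a)$ almost surely, then $\Ecal^R_\Dcal(M^\star) = 0 \leq \min_M \Ecal^R_\Dcal(M)$ and the bracket is nonpositive, finishing at once. In the general stochastic case, where $\E[r \mid s,a] = R^\star(s,a) = R_\Mtrue(s,a)$, I would run a standard least-squares regression argument. Let $\hat M$ minimize $\Ecal^R_\Dcal$ and, per sample, set $Z = (R_{\hat M}(s,a)-r)^2 - (R_\Mtrue(s,a)-r)^2$, so that $\Ecal^R_\Dcal(M^\star) - \Ecal^R_\Dcal(\hat M) = -\Vmax^{-2}\sum_\Dcal Z$. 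Expanding the square and using $\E[r - R_\Mtrue \mid s,a] = 0$ gives $\E[Z] = \E_\mu[(R_{\hat M}-R_\Mtrue)^2] \geq 0$, while boundedness of rewards in $[0,\Vmax]$ yields the variance-to-mean bound $\mathrm{Var}(Z) \leq \Ocal(\Vmax^2\,\E[Z])$. A Bernstein inequality with a union bound over the (at most $|\Mcal|$) reward functions $\{R_M\}$, followed by the usual self-bounding/offset step that maximizes the resulting tail bound over the nonnegative scalar $\E[Z]$, shows that the leading $n\,\E[Z]$ term absorbs the fluctuations, so that $\Ecal^R_\Dcal(M^\star) - \Ecal^R_\Dcal(\hat M) \leq \Ocal(\log(\nicefrac{|\Mcal|}{\delta}))$ with probability at least $1-\delta/2$.

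The main obstacle is precisely this stochastic-reward bracket: obtaining an \emph{additive} $\Ocal(\log(\nicefrac{|\Mcal|}{\delta}))$ rather than a slow $\sqrt{n}$-type deviation requires the fast-rate Bernstein/offset argument exploiting $\mathrm{Var}(Z)\lesssim \E[Z]$; the transition bracket, by contrast, is essentially free from \cref{lem:MLE_Mstar}. Once both brackets are bounded on their respective $1-\delta/2$ events, a union bound and summing the two $\Ocal(\log(\nicefrac{|\Mcal|}{\delta}))$ contributions yields the claim.
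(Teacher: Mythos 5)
Your proposal is correct and follows essentially the same route as the paper: decompose $\Ecal_\Dcal$ into the log-likelihood and reward-regression parts, control the former via \cref{lem:MLE_Mstar}, and control the latter with a fast-rate least-squares bound. The only difference is that the paper imports the reward bracket directly from \citet[Theorem A.1, with $\gamma=0$]{xie2021bellman} rather than re-deriving the Bernstein/self-bounding argument, which is exactly the argument you sketch.
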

\begin{proof}[\cpfname{lem:Vspace_Mstar}]
By defition, we know
\begin{align*}
    \Ecal_\Dcal(M) =  -\log \ell_\Dcal(M) + \nicefrac{(R_M(s,a) - r)^2 }{\Vmax^2}
\end{align*}
By \cref{lem:MLE_Mstar}, we know
\begin{align}
\label{eq:Vspace_Mstar_P}
\max_{M \in \Mcal} \log \ell_{\Dcal}(M) - \log \ell_{\Dcal}(M^\star) \leq \log(\nicefrac{|\Mcal|}{\delta}).
\end{align}
In addition, by~\citet[Theorem A.1]{xie2021bellman} (with setting $\gamma = 0$), we know w.p.~$1-\delta$,
\begin{align}
\label{eq:Vspace_Mstar_R}
\sum_{(s,a,r,s') \in \Dcal} \left(R^\star(s,a) - r\right)^2 - \min_{M \in \Mcal} \sum_{(s,a,r,s') \in \Dcal} \left(R_M(s,a) - r\right)^2 \lesssim \log(\nicefrac{|\Mcal|}{\delta}).
\end{align}
Combining \cref{eq:Vspace_Mstar_P,eq:Vspace_Mstar_R} and using the fact of $\Vmax \geq 1$, we have w.p.~$1-\delta$,
\begin{align*}
&~ \Ecal_{\Dcal}(M^\star) - \min_{M \in \Mcal} \Ecal_{\Dcal}(M) 
\\
\leq &~ \max_{M \in \Mcal} \log \ell_{\Dcal}(M) - \min_{M \in \Mcal} \sum_{(s,a,r,s') \in \Dcal} \nicefrac{\left(R_M(s,a) - r\right)^2}{\Vmax^2} + \Ecal_{\Dcal}(M^\star)
\\
\lesssim &~ \log(\nicefrac{|\Mcal|}{\delta}).
\end{align*}
This completes the proof.
\end{proof}

\begin{lemma}
\label{lem:Vspace_M}
For any $M \in \Mcal$, we have with probability at least $1 - \delta$,
\begin{gather*}
\E_\mu \left[ \TV\left(P_M(\cdot \mid s,a), P_\Mtrue(\cdot \mid s,a)\right)^2 + \nicefrac{\left(R_M(s,a) - R^\star(s,a)\right)^2}{\Vmax^2} \right]
\\
\leq \Ocal \left( \frac{ \Ecal_\Dcal(M) - \Ecal_{\Dcal}(M^\star) + \log(\nicefrac{|\Mcal|}{\delta})}{n} \right),
\end{gather*}
where $\Ecal_\Dcal$ is defined in \cref{eq:def_loss}.
\end{lemma}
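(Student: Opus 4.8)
The goal is to control the combined model-fitting error on the data distribution $\mu$---measured in TV distance for the transition kernel and squared error for the reward---by the excess model-fitting loss $\Ecal_\Dcal(M) - \Ecal_\Dcal(M^\star)$ plus a statistical term. The natural strategy is to decompose the target quantity into its transition part and its reward part, bound each separately using the two lemmas already established, and then recombine. The key observation is that the model-fitting loss $\Ecal_\Dcal(M)$ in \cref{eq:def_loss} splits additively into a negative-log-likelihood piece and a reward-squared-error piece, so the excess loss $\Ecal_\Dcal(M) - \Ecal_\Dcal(M^\star)$ likewise splits into a likelihood-difference term and a reward-error-difference term.

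First I would handle the transition part. By \cref{lem:MLE_bound_TV} applied to $M$, the quantity $\E_\mu[\TV(P_M, P_\Mtrue)^2]$ is bounded by $\Ocal\big((\log\ell_\Dcal(M^\star) - \log\ell_\Dcal(M) + \log(\nicefrac{|\Mcal|}{\delta}))/n\big)$. Since $-\log\ell_\Dcal(M)$ is exactly the likelihood contribution to $\Ecal_\Dcal(M)$, the difference $\log\ell_\Dcal(M^\star) - \log\ell_\Dcal(M)$ is precisely the likelihood part of the excess loss $\Ecal_\Dcal(M) - \Ecal_\Dcal(M^\star)$. For the reward part, I would invoke a standard least-squares generalization bound (the same $\gamma=0$ instantiation of \citet[Theorem A.1]{xie2021bellman} used in the proof of \cref{lem:Vspace_Mstar}) to show that $\E_\mu[(R_M(s,a) - R^\star(s,a))^2]$ is bounded by $\Ocal\big((\sum_\Dcal (R_M - r)^2 - \sum_\Dcal (R^\star - r)^2 + \log(\nicefrac{|\Mcal|}{\delta}))/n\big)$; dividing through by $\Vmax^2$ matches the normalization in $\Ecal_\Dcal$, and the empirical reward-error difference is exactly the reward part of the excess loss.

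Then I would add the two bounds. The sum of the empirical likelihood difference and the (normalized) empirical reward-error difference equals $\Ecal_\Dcal(M) - \Ecal_\Dcal(M^\star)$ by the additive structure of \cref{eq:def_loss}, so combining the two high-probability events (via a union bound, each at level $\delta/2$, absorbing the constant into the $\Ocal$) yields
\begin{align*}
\E_\mu \left[ \TV\left(P_M, P_\Mtrue\right)^2 + \nicefrac{\left(R_M - R^\star\right)^2}{\Vmax^2} \right] \leq \Ocal \left( \frac{ \Ecal_\Dcal(M) - \Ecal_{\Dcal}(M^\star) + \log(\nicefrac{|\Mcal|}{\delta})}{n} \right),
\end{align*}
which is the claim. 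The main subtlety I anticipate is bookkeeping the normalization by $\Vmax^2$ consistently between the reward term in $\Ecal^\star(M)$ and the reward term in $\Ecal_\Dcal$, together with ensuring the least-squares bound from \citet{xie2021bellman} applies under the boundedness of rewards in $[0,1]$ (equivalently within $[\Vmin,\Vmax]$). Everything else is a direct reuse of \cref{lem:MLE_bound_TV} and the least-squares machinery already cited, so I expect no genuinely hard step---only careful matching of the per-sample loss terms to the population error terms.
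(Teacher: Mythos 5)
Your proposal is correct and takes essentially the same route as the paper: the paper likewise splits the bound into the transition part (handled by \cref{lem:MLE_bound_TV}) and the reward part (handled by the least-squares concentration machinery of \citet{xie2021bellman} at $\gamma=0$, via the identity $\E_\mu[(R_M-R^\star)^2]=\E_\mu[(R_M-r)^2]-\E_\mu[(R^\star-r)^2]$), then adds the two using $\Vmax\geq 1$ so that the empirical terms reassemble into $\Ecal_\Dcal(M)-\Ecal_\Dcal(M^\star)$. The only cosmetic difference is the exact pointer into \citet{xie2021bellman} (the paper invokes their Lemma A.4 for this direction rather than Theorem A.1); the substance is identical.
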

\begin{proof}[\cpfname{lem:Vspace_M}]
By \cref{lem:MLE_bound_TV}, we have w.p.~$1-\delta$,
\begin{align}
\label{lem:Vspace_MP}
n \cdot \E_\mu \left[ \TV\left(P_M(\cdot \mid s,a), P_\Mtrue(\cdot \mid s,a)\right)^2 \right] \lesssim \log\ell_{\Dcal}(M^\star) - \log\ell_\Dcal(M) + \log(\nicefrac{|\Mcal|}{\delta}).
\end{align}
Also, we have
\begin{align}
\label{lem:Vspace_MR}
&~ n \cdot \E_\mu \left[ \left(R_M(s,a) - R^\star(s,a)\right)^2 \right]
\\
\nonumber
= &~ n \cdot \E_\mu \left[ \left(R_M(s,a) - r \right)^2 \right] -  n \cdot \E_\mu \left[ \left(R^\star(s,a) - r \right)^2\right]
\tag*{\citep[see, e.g.,][Eq.\;(A.10) with $\gamma = 0$]{xie2021bellman}}
\\
\nonumber
\lesssim &~ \sum_{(s,a,r,s') \in \Dcal} \left(R_M(s,a) - r\right)^2 - \sum_{(s,a,r,s') \in \Dcal} \left(R^\star(s,a) - r\right)^2 + \log(\nicefrac{|\Mcal|}{\delta}),
\end{align}
where the last inequality is a direct implication of \citet[Lemma A.4]{xie2021bellman}. Combining \cref{lem:Vspace_MP,lem:Vspace_MR} and using the fact of $\Vmax \geq 1$, we obtain
\begin{align*}
&~ n \cdot \E_\mu \left[ \TV\left(P_M(\cdot \mid s,a), P_\Mtrue(\cdot \mid s,a)\right)^2 + \nicefrac{\left(R_M(s,a) - R^\star(s,a)\right)^2}{\Vmax^2} \right]
\\
\lesssim &~ \log\ell_{\Dcal}(M^\star) - \sum_{(s,a,r,s') \in \Dcal} \nicefrac{\left(R^\star(s,a) - r\right)^2}{\Vmax^2} - \log\ell_\Dcal(M) + \sum_{(s,a,r,s') \in \Dcal} \nicefrac{\left(R_M(s,a) - r\right)^2}{\Vmax^2} + \log(\nicefrac{|\Mcal|}{\delta})
\\
= &~ \Ecal_\Dcal(M)  - \Ecal_{\Dcal}(M^\star)  + \log(\nicefrac{|\Mcal|}{\delta}).
\end{align*}
This completes the proof.
\end{proof}

\subsection{Proof of Main Theorems}


\begin{proof}[\cpfname{thm:perf}]
By the optimality of $\pihat$ (from \cref{eq:def_pihat}), we have
\begin{align}
J(\picom) - J(\pihat) = &~ J(\picom) - J(\piref) - \left[ J(\pihat) - J(\piref) \right]
\nonumber
\\
\label{eq:real_term1}
\leq &~ J(\picom) - J(\piref) - \min_{M \in \Mcal_\alpha} \left[ J_M(\pihat) - J_M(\piref) \right]
\tag{$\star$}
\\
\leq &~ J(\picom) - J(\piref) - \min_{M \in \Mcal_\alpha} \left[ J_M(\picom) - J_M(\piref) \right],
\label{eq:perfbdeq1}
\end{align}
where step \eqref{eq:real_term1} follows from \cref{lem:MLE_Mstar} so that we have $M^\star \in \Mcal_\alpha$, and the last step is because of $\picom \in \Pi$.
By the simulation lemma (\cref{lem:simulation}), we know for any policy $\pi$ and any $M \in \Mcal_\alpha$, 
\begin{align}
\left| J(\pi) - J_M(\pi) \right| \leq &~ \frac{\Vmax}{1 - \gamma} \E_{d^\pi} \left[ \TV\left(P_M(\cdot \mid s,a), P_\Mtrue(\cdot \mid s,a)\right) \right] + \frac{1}{1 - \gamma} \E_{d^\pi} \left[\left|R_M(s,a) - R^\star(s,a)\right| \right]
\nonumber
\\
\leq &~ \frac{\Vmax}{1 - \gamma} \sqrt{\E_{d^\pi} \left[ \TV\left(P_M(\cdot \mid s,a), P_\Mtrue(\cdot \mid s,a)\right)^2 \right]} + \frac{\Vmax}{1 - \gamma} \sqrt{ \E_{d^\pi} \left[\nicefrac{\left(R_M(s,a) - R^\star(s,a)\right)^2}{\Vmax^2} \right]}
\nonumber
\\
\lesssim &~ \frac{\Vmax}{1 - \gamma} \sqrt{\E_{d^\pi} \left[ \TV\left(P_M(\cdot \mid s,a), P_\Mtrue(\cdot \mid s,a)\right)^2 + \nicefrac{\left(R_M(s,a) - R^\star(s,a)\right)^2}{\Vmax^2} \right]}
\tag{$a \lesssim b$ means $a \leq \Ocal(b)$}
\\
\leq &~ \frac{\Vmax\sqrt{\C_\Mcal(\pi)}}{1 - \gamma} \sqrt{\E_{\mu} \left[ \TV\left(P_M(\cdot \mid s,a), P_\Mtrue(\cdot \mid s,a)\right)^2 + \nicefrac{\left(R_M(s,a) - R^\star(s,a)\right)^2}{\Vmax^2} \right]}
\nonumber
\\
\lesssim &~ \frac{\Vmax\sqrt{\C_\Mcal(\pi)}}{1 - \gamma} \sqrt{\frac{ \Ecal_\Dcal(M) - \Ecal_\Dcal(M^\star) +  \log(\nicefrac{|\Mcal|}{\delta})}{n}}
\tag{by \cref{lem:Vspace_M}}
\\
\label{eq:real_term2}
\lesssim &~ \frac{\Vmax\sqrt{\C_\Mcal(\pi)}}{1 - \gamma} \sqrt{\frac{ \Ecal_\Dcal(M) - \min_{M' \in \Mcal} \Ecal_\Dcal(M') +  \log(\nicefrac{|\Mcal|}{\delta})}{n}}
\tag{$\ddag$}
\\
\lesssim &~ \frac{\Vmax\sqrt{\C_\Mcal(\pi)}}{1 - \gamma} \sqrt{\frac{\log(\nicefrac{|\Mcal|}{\delta})}{n}}
\label{eq:perfbdeq2}
\end{align}

where the step \eqref{eq:real_term2} follows from the assumption of $M^\star \in \Mcal$, and last step is because $\Ecal_\Dcal(M) - \min_{M' \in \Mcal} \Ecal_\Dcal(M')  \leq \alpha = \Ocal(\log(\nicefrac{|\Mcal|}{\delta})$ by \cref{eq:v_space}.

Combining \cref{eq:perfbdeq1,eq:perfbdeq2}, we obtain
\begin{align*}
J(\picom) - J(\pihat) \lesssim &~ \left[\sqrt{\C_\Mcal(\picom)} + \sqrt{\C_\Mcal(\piref)}\right] \cdot \frac{\Vmax}{1 - \gamma} \sqrt{\frac{\log(\nicefrac{|\Mcal|},{\delta})}{n}}.
\end{align*}
This completes the proof.
\end{proof}

\new{
Note that, over the proof above, only steps \eqref{eq:real_term1} and \eqref{eq:real_term2} have used the realizability assumption of $M^\star \in \Mcal$.
To extend that to the misspecification case, where there only exists an $\Mbest \in \Mcal$ such that $\Mbest$ is close to $M^\star$ up to some misspecification error, we just need the following straightforward accommodations:
\begin{enumerate}[(1)]
    \item A variant of \cref{lem:MLE_Mstar}---to ensure that $\Mbest$ is included in the version space $\Mcal_\alpha$. By doing so, the misspecification error should also be included in the radius of the version space.
    \item Upper bound $|J(\pi) - J_\Mbest(\pi)|$ for any $\pi$ using misspecification error. This is a standard argument, and by combining with the item above, \eqref{eq:real_term1} becomes $J(\pihat) - J(\piref) \geq \min_{M \in \Mcal_\alpha} \left[ J_M(\pihat) - J_M(\piref) \right] - \textsf{misspecification error}$.
    \item Upper bound difference in model-fitting error $\Ecal_\Dcal(\Mbest) - \Ecal_\Dcal(M^\star)$ using misspecification error. Then, step \eqref{eq:real_term2} becomes, for $M \in \Mcal_\alpha$,
    \begin{align*}
        \Ecal_\Dcal(M) - \Ecal_\Dcal(M^\star) = &~ \Ecal_\Dcal(M) - \Ecal_\Dcal(\Mbest) + \Ecal_\Dcal(\Mbest) - \Ecal_\Dcal(M^\star)
        \\
        \leq &~ \Ecal_\Dcal(M) - \min_{M' \in \Mcal} \Ecal_\Dcal(M') + \textsf{misspecification error}
        \\
        \lesssim &~ \log\left(\nicefrac{|\Mcal|}{\delta}\right) + \textsf{misspecification error}.
    \end{align*}
\end{enumerate}
Due to the unboundedness of the likelihood, we conjecture that naively defining misspecification error using total variation without any accommodation on the MLE loss may be insufficient for the steps above. To resolve that, we may adopt an alternate misspecification definition, e.g., $\left| \log P_\Mtrue(s' \mid s,a) - \log P_\Mbest(s' \mid s,a) \right| \leq \varepsilon,~\forall (s,a,s') \in \Scal\times\Acal\times\Scal$, or  add extra smoothing to the MLE loss with regularization.
}

\begin{proof}[\cpfname{thm:RPI}]
\begin{align*}
J(\piref) - J(\pihat) = &~ J(\piref) - J(\piref) - \left[ J(\pihat) - J(\piref) \right]
\nonumber
\\
\leq &~ - \min_{M \in \Mcal_\alpha} \left[ J_M(\pihat) - J_M(\piref) \right]
\tag{by \cref{lem:MLE_Mstar}, we have $M^\star \in \Mcal_\alpha$}
\\
= &~ - \max_{\pi \in \Pi}\min_{M \in \Mcal_\alpha} \left[ J_M(\pi) - J_M(\piref) \right]
\tag{by the optimality of $\pihat$ from \cref{eq:def_pihat}}
\\
\leq &~ - \min_{M \in \Mcal_\alpha} \left[ J_M(\piref) - J_M(\piref) \right]
\tag{$\piref \in \Pi$}
\\
= &~ 0.
\end{align*}
\end{proof}

\new{
A misspecified version of \cref{thm:RPI} can be derived similarly to what we discussed about that of \cref{thm:perf}. If the policy class is also misspecified, where there exists only $\pibestref \in \Pi$ that is close to $\piref$ up to some misspecification error, the second last step of the proof of \cref{thm:RPI} becomes $- \min_{M \in \Mcal_\alpha} \left[ J_M(\pibestref) - J_M(\piref) \right] \leq \textsf{misspecification error}$ by simply applying the performance difference lemma on the difference between $\pibestref$ and $\piref$.
}

\section{Proofs for \cref{sec:RPI}}


\begin{proof}[\cpfname{lm:fixed-point lm of mb-atac}]
We prove the result by contradiction.
First notice $\min_{M\in\Mcal} J_M(\pi') - J_M(\pi') = 0$.
Suppose there is $\overline{\pi} \in \Pi$ such that $\min_{M\in\Mcal_\alpha} J_M(\bar{\pi}) - J_M(\pi') >0$, which implies that $J_M(\bar{\pi}) >J_M(\pi') $, $\forall M \in \Mcal_\alpha$.
Since $\Mcal \subseteq \Mcal_\alpha$, we have 
\begin{align*}
    \min_{M\in\Mcal} J_M(\bar{\pi})  + \psi(M) 
    &>  \min_{M\in\Mcal}  J_M(\pi')  + \psi(M) =  \max_{\pi\in\Pi}\min_{M\in\Mcal}  J_M(\pi)  + \psi(M)
\end{align*}
which is a contradiction of the maximin optimality. Thus $\max_{\pi\in\Pi} \min_{M\in\Mcal_\alpha} J_M(\bar{\pi}) - J_M(\pi') = 0 $, which means $\pi'$ is a solution.

For the converse statement, suppose $\pi$ is a fixed point. We can just let $\psi(M) = -J_M(\pi)$. Then this pair of $\pi$ and $\psi$ by definition of the fixed point satisfies \Eqref{eq:candidate fixed-point policies}.
\end{proof}

\section{Related Work}\label{appendix:related work}
There has been an extensive line of works on reinforcement with offline/batch data, especially for the case with the data distribution is rich enough to capture the state-action distribution for any given policy~\citep{munos2003error, antos2008learning, munos2008finite, farahmand2010error, lange2012batch, chen2019information, liu2020off, xie2020q, xie2021batch}.
However, this assumption is not practical since the data distribution is typically restricted by factors such as the quality of available policies, safety concerns, and existing system constraints, leading to narrower coverage. As a result, recent offline RL works in both theoretical and empirical literature have focused on systematically addressing datasets with inadequate coverage.

Modern offline reinforcement learning approaches can be broadly categorized into two groups for the purpose of learning with partial coverage. 
The first type of approaches rely on behavior regularization, where the learned policy is encouraged to be close to the behavior policy in states where there is insufficient data~ \citep[e.g.,][]{fujimoto2018addressing,laroche2019safe,kumar2019stabilizing,siegel2020keep}. These algorithms ensure that the learned policy performs at least as well as the behavior policy while striving to improve it when possible, providing a form of  safe policy improvement guarantees. 
These and other studies \citep{wu2019behavior,fujimoto2021minimalist,kostrikov2021offline} have provided compelling empirical evidence for the benefits of these approaches.

The second category of approaches that has gained prevalence relies on the concept of \textit{pessimism under uncertainty} to construct lower-bounds on policy performance without explicitly constraining the policy. Recently, there have been several model-free and model-based algorithms based on this concept that have shown great empirical performance on high dimensional continuous control tasks. Model-free approaches operate by constructing lower bounds on policy performance and then optimizing the policy with respect to this lower bound~\citep{kumar2020conservative, kostrikov2021offline}. The model-based counterparts first learn a world model and the optimize a policy using model-based rollouts via off-the-shelf algorithms such as Natural Policy Gradient \citep{kakade2001natural} or Soft-Actor Critic \citep{haarnoja2018soft}. Pessimism is introduced by either terminating  model rollouts using uncertainty estimation from an ensemble of neural network models~\citep{kidambi2020morel} or modifying the reward function to penalize visiting uncertain regions~\citep{yu2020mopo}. ~\citet{yu2021combo} propose a hybrid model-based and model-free approach that integrates model-based rollouts into a model-free algorithm to construct tighter lower bounds on policy performance.
On the more theoretical side, the offline RL approaches built upon the pessimistic concept \citep[e.g., ][]{liu2020provably,jin2021pessimism,rashidinejad2021bridging,xie2021bellman,zanette2021provable,uehara2021pessimistic,shi2022pessimistic} also illustrate desired theoretical efficacy under various of setups. 
 
Another class of approaches employs an adversarial training framework, where offline RL is posed a two player game between an adversary that chooses the worst-case hypothesis (e.g., a value function or an MDP model) from a hypothesis class, and a policy player that tried to maximize the adversarially chosen hypothesis. \citet{xie2021bellman} propose the concept of Bellman-consistent pessimism to constrain the class of value functions to be Bellman consistent on the data.  
\citet{cheng2022adversarially} extend this framework by introducing a relative pessimism objective which allows for robust policy improvement over the data collection policy $\mu$ for a wide range of hyper-parameters. Our approach can be interpreted as a model-based extension of~\citet{cheng2022adversarially}. These approaches provide strong theoretical guarantees even with general function approximators while making minimal assumptions about the function class (realizability and Bellman completeness). \new{~\cite{chen2022adversarial} provide an adversarial model learning method that uses an adversarial policy to generate a data-distribution where the model performs poorly and iteratively updating the model on the generated distribution. }  
There also exist model-based approaches based on the same principle~\citep{uehara2021pessimistic, rigter2022rambo} for optimizing the absolute performance. Of these,~\citet{rigter2022rambo} is the closest to our approach, as they also aim to find an adversarial MDP model that minimizes policy performance. They use a policy gradient approach to train the model, and demonstrate great empirical performance. However, their approach is based on absolute pessimism and does not enjoy the same RPI property as \algo.

\section{A Deeper Discussion of Robust Policy Improvement}\label{appendix:rpi_deeper}
\subsection{How to formally define RPI?}

Improving over some reference policy has been long studied in the literature. 
To highlight the advantage of \algo, we formally give the definition of different  policy improvement properties.

\begin{definition}[Robust policy improvement] \label{def:RPI}
Suppose $\pihat$ is the learned policy from an algorithm. We say the algorithm has the policy improvement (PI) guarantee if $J(\piref) - J(\pihat) \leq \nicefrac{o(N)}{N}$ is guaranteed for some reference policy $\piref$ with offline data $\Dcal \sim \mu$, where $N =|\Dcal|$.
We use the following two criteria w.r.t.~$\piref$ and $\mu$ to define different kinds PI:
\begin{enumerate}[(i)]
\item The PI is \txieul{strong} if $\piref$ can be selected arbitrarily from policy class $\Pi$ regardless of the choice data-collection policy $\mu$; otherwise, PI is \txieul{weak} (i.e., $\piref \equiv \mu$ is required).
\item The PI is \txieul{robust} if it can be achieved by a range of hyperparameters with a known subset.
\end{enumerate}
\end{definition}

Weak policy improvement is also known as \emph{safe policy improvement} in the literature~\citep{fujimoto2019off,laroche2019safe}. It requires the reference policy to be also the behavior policy that collects the offline data. 
In comparison, strong policy improvement imposes a stricter requirement, which requires policy improvement \emph{regardless} of how the data were collected. This condition is motivated by the common situation where the reference policy is not the data collection policy. 
Finally, since we are learning policies offline, without online interactions, it is not straightforward to tune the hyperparameter directly. 
Therefore, it is desirable that we can design algorithms with these properties in a robust manner in terms of hyperparameter selection. Formally, \cref{def:RPI} requires the policy improvement to be achievable by a set of hyperparameters that is known before learning.

\cref{thm:RPI} indicates the robust strong policy improvement of \algo. On the other hand, algorithms with robust weak policy improvement are available in the literature~\citep{fujimoto2019off,kumar2019stabilizing,wu2019behavior,laroche2019safe,fujimoto2021minimalist,cheng2022adversarially}; this is usually achieved by designing the algorithm to behave like IL for a known set of hyperparameter (e.g.,  behavior regularization algorithms have a weight that can turn off the RL behavior and regress to IL). 
However, deriving guarantees of achieving the best data-covered policy of the IL-like algorithm is challenging due to its imitating nature. To our best knowledge, ATAC~\citep{cheng2022adversarially} is the only algorithm that achieves both robust (weak) policy improvement as well as guarantees absolute performance.

\subsection{When RPI actually improves?}

Given \algo's ability to improve over an arbitrary policy, the following questions naturally arise:
\textit{
Can \algo nontrivially improve the output policy of other algorithms (e.g., such as those based on \emph{absolute pessimism} \citep{xie2021bellman}), including itself?}
Note that outputting $\piref$ itself always satisfies RPI, but such result is trivial.  By ``nontrivially'' we mean a non-zero worst-case improvement. 
If the statement were true, we would be able to repeatedly run \algo to improve over itself and then obtain the \emph{best} policy any algorithm can learn offline.

Unfortunately, the answer is negative. Not only \algo cannot improve over itself, but it also cannot improve over a variety of algorithms. In fact, the optimal policy of an \textit{arbitrary} model in the version space is unimprovable (see \cref{th:fixed-point examples})! Our discussion reveals some interesting observations (e.g., how equivalent performance metrics for online RL can behave very differently in the offline setting) and their implications (e.g., how we should choose $\piref$ for \algo). Despite their simplicity, we feel that many in the offline RL community are not actively aware of these facts (and the unawareness has led to some confusion), which we hope to clarify below.

\paragraph{Setup} We consider an abstract setup where the learner is given a version space $\Mcal_\alpha$ that contains the true model 
and needs to choose a policy $\pi\in\Pi$ based on $\Mcal_\alpha$. We use the same notation $\Mcal_\alpha$ as before, but emphasize that it does not have to be constructed as in \Eqref{eq:v_space,eq:def_loss}. 
In fact, for the purpose of this discussion, the data distribution,  sample size, data randomness, and estimation procedure for constructing $\Mcal_\alpha$ are \textbf{all irrelevant}, as our focus here is how decisions should be made with a given $\Mcal_\alpha$. This makes our setup very generic and the conclusions widely applicable.

To facilitate discussion, we define the \textit{fixed point} of \algo's relative pessimism step:
\begin{definition} \label{def:fixed_point}
Consider \Eqref{eq:def_pihat} as an operator that maps an arbitrary policy $\piref$ to $\pihat$. A fixed point of this \emph{relative pessimism} operator is, therefore, any policy $\pi \in \Pi$ such that
$
\pi \in \argmax_{\pi' \in \Pi} \min_{M \in \Mcal_\alpha} J_M(\pi') - J_M(\pi)
$.
\end{definition}
Given the definition, relative pessimism cannot improve over a policy if it is already a fixed point. Below we show a sufficient and necessary condition for being a fixed point, and show a number of concrete examples (some of which may be surprising) that are fixed points and thus unimprovable.

\begin{lemma}[Fixed-point Lemma] \label{lm:fixed-point lm of mb-atac}
For any $\Mcal \subseteq \Mcal_\alpha$ and any $\psi:\Mcal\to\mathbb{R}$, consider the policy
\begin{align} \label{eq:candidate fixed-point policies}
    \pi \in \argmax_{\pi'\in\Pi} \min_{M\in\Mcal} J_M(\pi') + \psi(M)
\end{align}
Then $\pi$ is a fixed point in \cref{def:fixed_point}.
Conversely, for any fixed point $\pi$ in \cref{def:fixed_point}, there is a $\psi:\Mcal\to\mathbb{R}$ such that $\pi$ is a solution to \Eqref{eq:candidate fixed-point policies}.
\end{lemma}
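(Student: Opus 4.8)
The statement has two directions, and I would prove each separately. For the forward direction, the plan is to show that any policy $\pi$ solving \Eqref{eq:candidate fixed-point policies} satisfies the fixed-point condition in \cref{def:fixed_point}, namely $\pi \in \argmax_{\pi'\in\Pi} \min_{M\in\Mcal_\alpha} J_M(\pi') - J_M(\pi)$. The natural approach is proof by contradiction. First I would observe the trivial baseline: plugging $\pi'=\pi$ into the relative objective gives $\min_{M\in\Mcal_\alpha} J_M(\pi) - J_M(\pi) = 0$, so the optimal value of the relative pessimism operator applied to $\pi$ is at least $0$. To show $\pi$ is a fixed point, it therefore suffices to show no $\bar\pi$ can achieve a \emph{strictly positive} value, i.e.\ there is no $\bar\pi\in\Pi$ with $\min_{M\in\Mcal_\alpha} J_M(\bar\pi) - J_M(\pi) > 0$.

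Suppose for contradiction such a $\bar\pi$ exists. Then $\min_{M\in\Mcal_\alpha} \left[J_M(\bar\pi) - J_M(\pi)\right] > 0$ forces $J_M(\bar\pi) > J_M(\pi)$ for \emph{every} $M\in\Mcal_\alpha$, and in particular for every $M$ in the subset $\Mcal\subseteq\Mcal_\alpha$. The key step is then to add the fixed regularizer $\psi(M)$ to both sides and push through the minimum: since $J_M(\bar\pi) + \psi(M) > J_M(\pi) + \psi(M)$ pointwise over $\Mcal$, taking $\min_{M\in\Mcal}$ preserves the strict inequality (a uniform pointwise gap survives the min), yielding $\min_{M\in\Mcal} J_M(\bar\pi)+\psi(M) > \min_{M\in\Mcal} J_M(\pi)+\psi(M) = \max_{\pi'\in\Pi}\min_{M\in\Mcal} J_M(\pi')+\psi(M)$. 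This contradicts the maximin optimality of $\pi$ in \Eqref{eq:candidate fixed-point policies}. Hence no improving $\bar\pi$ exists, the operator's optimal value is exactly $0$, and $\pi$ is a fixed point.

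For the converse, given any fixed point $\pi$ I would exhibit an explicit choice of $\psi$ that realizes it through \Eqref{eq:candidate fixed-point policies}. The clean construction is to take $\Mcal = \Mcal_\alpha$ and $\psi(M) = -J_M(\pi)$. Then $\min_{M\in\Mcal_\alpha} J_M(\pi') + \psi(M) = \min_{M\in\Mcal_\alpha} J_M(\pi') - J_M(\pi)$, so \Eqref{eq:candidate fixed-point policies} becomes literally $\argmax_{\pi'\in\Pi}\min_{M\in\Mcal_\alpha} J_M(\pi') - J_M(\pi)$, which is exactly the fixed-point condition $\pi$ satisfies by assumption. Thus $\pi$ solves \Eqref{eq:candidate fixed-point policies} for this $\psi$.

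I expect the forward direction to be the only part requiring care, and within it the main (though still elementary) obstacle is the step asserting that a pointwise strict inequality $J_M(\bar\pi)+\psi(M) > J_M(\pi)+\psi(M)$ survives the minimization over $M$. This is where the argument could be fragile if $\Mcal$ were a general index set, since strict pointwise inequalities need not yield a strict inequality of infima without a uniform gap. The cleanest way to secure it is to note that the contradiction hypothesis gives a \emph{uniform} positive lower bound $\min_{M\in\Mcal_\alpha}\left[J_M(\bar\pi)-J_M(\pi)\right] = \epsilon > 0$, so that $J_M(\bar\pi) \geq J_M(\pi) + \epsilon$ for all $M\in\Mcal\subseteq\Mcal_\alpha$; adding $\psi(M)$ and minimizing then shifts the whole objective up by at least $\epsilon$, giving the strict contradiction without any subtlety. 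Everything else is bookkeeping with the definitions.
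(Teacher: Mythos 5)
Your proposal is correct and follows essentially the same route as the paper's proof: a contradiction argument for the forward direction (an improving $\bar\pi$ over $\Mcal_\alpha$ would also improve the $\psi$-regularized maximin objective over the subset $\Mcal$), and the explicit choice $\psi(M) = -J_M(\pi)$ for the converse. Your added remark that the strict inequality survives the minimization because the hypothesis supplies a \emph{uniform} gap $\epsilon>0$ is a point the paper glosses over, and it is a worthwhile clarification.
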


\begin{corollary} \label{th:fixed-point examples}
The following are fixed points of relative pessimism (\cref{def:fixed_point}):
\begin{enumerate}
    \item  Absolute-pessimism policy, i.e., $\psi(M) = 0 $.
    \item Relative-pessimism policy for any reference policy, i.e., $\psi(M) = - J_M(\piref)$. 
    \item Regret-minimization policy, i.e., $\psi(M) = -J_M(\pi_M^\star)$, where $\pi_M^\star \in \argmax_{\pi\in\Pi} J_M(\pi)$. 
    \item Optimal policy of an \emph{arbitrary} model $M \in \Mcal_\alpha$, $\pi_M^\star$, i.e., $\Mcal = \{M\}$. 
    This would include the optimistic policy, that is, $\argmax_{\pi\in\Pi, M\in\Mcal_\alpha} J_M(\pi)$
\end{enumerate}
\end{corollary}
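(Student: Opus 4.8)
The plan is to derive each of the four cases as a direct instantiation of the Fixed-point Lemma (\cref{lm:fixed-point lm of mb-atac}), by exhibiting, for each stated policy, a subset $\Mcal \subseteq \Mcal_\alpha$ and a weight function $\psi:\Mcal\to\mathbb{R}$ such that the policy solves \Eqref{eq:candidate fixed-point policies}. Once such a $(\Mcal,\psi)$ pair is identified, the forward direction of the lemma immediately certifies that the policy is a fixed point in the sense of \cref{def:fixed_point}, so no additional game-theoretic argument is required beyond matching forms.

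For the first three cases I would take $\Mcal = \Mcal_\alpha$ and simply read off $\psi$. Choosing $\psi(M)=0$ turns \Eqref{eq:candidate fixed-point policies} into $\argmax_{\pi'\in\Pi}\min_{M\in\Mcal_\alpha} J_M(\pi')$, the absolute-pessimism policy; choosing $\psi(M)=-J_M(\piref)$ recovers $\argmax_{\pi'\in\Pi}\min_{M\in\Mcal_\alpha} J_M(\pi')-J_M(\piref)$, which is exactly the relative-pessimism operator applied to $\piref$; and choosing $\psi(M)=-J_M(\pi_M^\star)$ yields $\argmax_{\pi'\in\Pi}\min_{M\in\Mcal_\alpha} J_M(\pi')-J_M(\pi_M^\star)$. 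For this third case I would additionally observe that maximizing the minimum over $M$ of $J_M(\pi')-J_M(\pi_M^\star)$ is the same as minimizing the maximum over $M$ of the per-model regret $J_M(\pi_M^\star)-J_M(\pi')$, so the solution coincides with the regret-minimization policy, confirming the stated $\psi$.

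For the fourth case I would instead collapse the model set to a singleton $\Mcal=\{M\}$, which is a legitimate subset of $\Mcal_\alpha$ whenever $M\in\Mcal_\alpha$, and take $\psi\equiv 0$. Then \Eqref{eq:candidate fixed-point policies} reduces to $\argmax_{\pi'\in\Pi} J_M(\pi')=\pi_M^\star$, so the optimal policy of any single model in the version space is a fixed point. To handle the optimistic policy specifically, I would note that $\argmax_{\pi\in\Pi,\,M\in\Mcal_\alpha} J_M(\pi)$ equals $\pi_{M_o}^\star$ for the maximizing model $M_o=\argmax_{M\in\Mcal_\alpha}\max_{\pi\in\Pi} J_M(\pi)$; this is again the optimal policy of one fixed model in $\Mcal_\alpha$, hence an instance of the singleton construction.

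I do not anticipate a genuine obstacle, since the Fixed-point Lemma does the heavy lifting and each case is a pattern-matching exercise. The only points requiring care are the sign and direction conventions: I would verify explicitly that the $\psi(M)=-J_M(\pi_M^\star)$ instantiation coincides with $\argmin_{\pi'}\max_M$ of the regret rather than its negation, and that the double-$\argmax$ optimistic objective genuinely factors as the optimal policy of a single best-case model, so that the singleton instantiation legitimately applies.
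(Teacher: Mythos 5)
Your proposal is correct and follows exactly the route the paper intends: the corollary is a direct instantiation of the Fixed-point Lemma (\cref{lm:fixed-point lm of mb-atac}) with the stated choices of $\Mcal$ and $\psi$, and the paper leaves precisely this pattern-matching implicit. Your added checks---that maximizing $\min_{M}\bigl(J_M(\pi')-J_M(\pi_M^\star)\bigr)$ is the same as minimizing worst-case regret, and that the optimistic policy is the optimal policy of the single maximizing model so the singleton construction applies---are the right sanity checks and introduce no gap.
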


\paragraph{Return maximization and regret minimization are \textit{different} in offline RL} We first note that these four examples generally produce different policies, even though some of them optimize for objectives that are traditionally viewed as equivalent in online RL (the ``worst-case over $\Mcal_\alpha$'' part of the definition does not matter in online RL), e.g., absolute pessimism optimizes for $J_M(\pi)$, which is the same as minimizing the regret $J_M(\pi_M^\star) - J_M(\pi)$ for a fixed $M$. However, their  equivalence in online RL relies on the fact that online exploration can eventually resolve any model uncertainty when needed, so we only need to consider the performance metrics w.r.t.~the true model $M=M^\star$. In offline RL with an arbitrary data distribution (since we do not make any coverage assumptions), there will generally be model uncertainty that cannot be resolved, and worst-case reasoning over such model uncertainty (i.e., $\Mcal_\alpha$) separates apart the definitions that are once equivalent. 

Moreover, it is impossible to  compare  return maximization and regret minimization and make a claim about which one is better. They are not simply an algorithm design choice, but are definitions of the learning goals and the guarantees themselves---thus incomparable: if we care about obtaining a guarantee for the worst-case \textit{return}, the  return maximization is optimal by definition; if we are more interested in obtaining a guarantee for the worst-case \textit{regret}, then again, regret minimization is trivially optimal. We also note that analyzing algorithms under a metric that is different from the one they are designed for can lead to unusual conclusions. For example, \citet{xiao2021optimality} show that optimistic/neutral/pessimistic algorithms\footnote{Incidentally, optimistic/neutral policies correspond to \#4 in \cref{th:fixed-point examples}.} are equally minimax-optimal in terms of their regret guarantees in offline multi-armed bandits. However, the algorithms they consider are optimistic/pessimistic w.r.t.~the return---as commonly considered in the offline RL literature---not w.r.t.~the regret which is the performance metric they are interested in analyzing. 

\paragraph{$\piref$ is more than a hyperparameter---it defines the performance metric and learning goal} \cref{th:fixed-point examples}  shows that \algo (with relative pessimism) has many different fixed points, some of which may seem quite unreasonable for offline learning, such as greedy w.r.t.~an arbitrary model or even optimism (\#4). From the above discussion, we can see that this is not a defect of the algorithm. Rather, in the offline setting with unresolvable model uncertainty, there are many different performance metrics/learning goals that are generally incompatible/incomparable with each other, and the agent designer must make a choice among them and convey the choice to the algorithm. In \algo, such a choice is explicitly conveyed by the choice of $\piref$, which subsumes return maximization and regret minimization as special cases (\#2 and \#3 in \cref{th:fixed-point examples})

\section{A More Comprehensive Toy Example for RPI}\label{appendix:toy_example}
\begin{figure}[ht!]
    \begin{subfigure}[b]{0.3\columnwidth}
        \centering
        \includegraphics[trim={0cm 0cm 0cm 0.1cm}, clip, width=\columnwidth]{figs/armor_1_new_with_reference_colored.png}
    \end{subfigure}
    \begin{subfigure}[b]{0.65\columnwidth}
        \centering
        \includegraphics[width=\columnwidth]{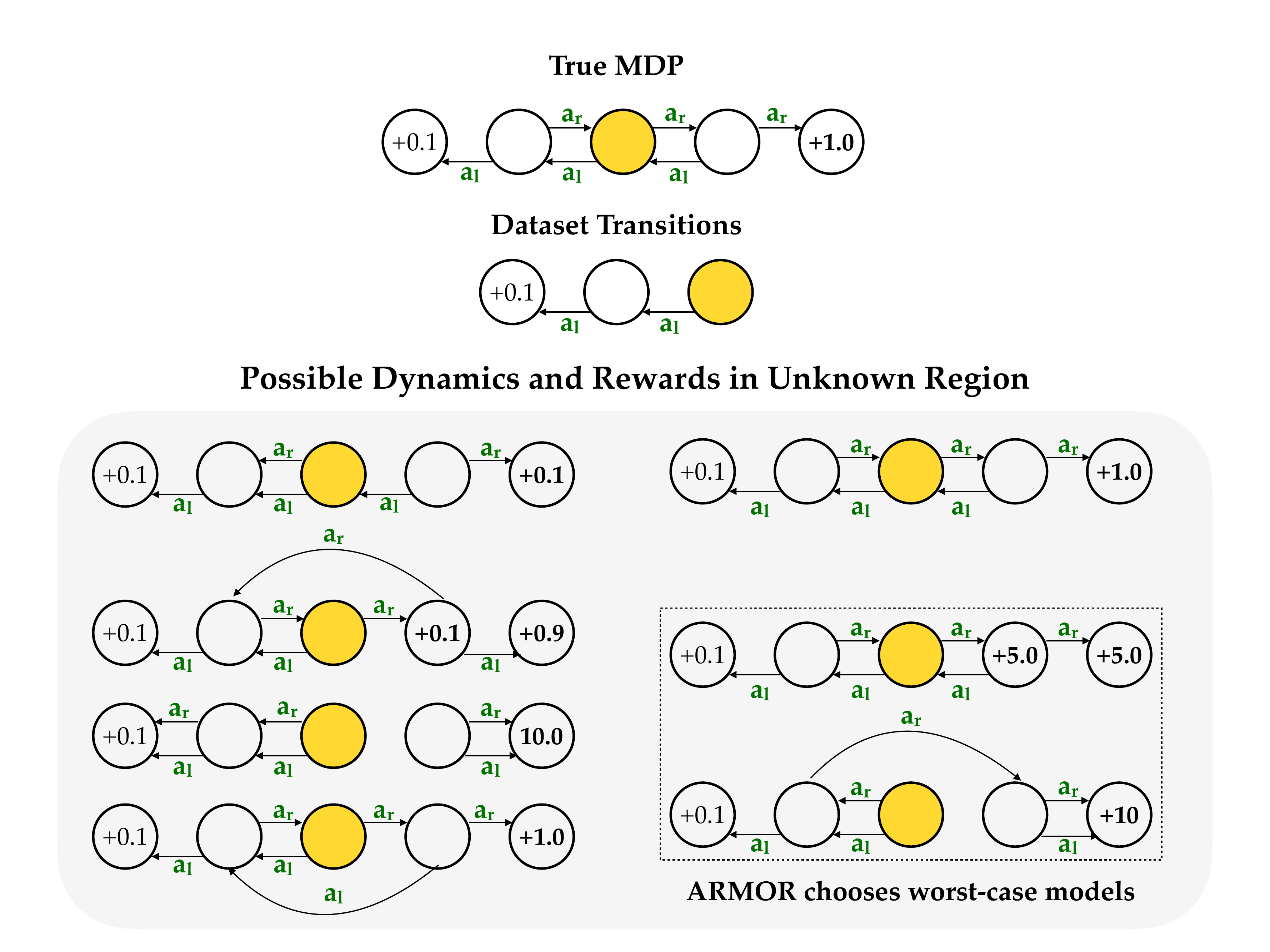}
    \end{subfigure}
    \caption{A toy MDP illustrating the RPI property of \algo. (Top) The true MDP has deterministic dynamics where taking the left ($a_l$) or right ($a_r$) actions takes the agent to corresponding states;  start state is in yellow. The suboptimal behavior policy only visits only the left part of the state space, and the reference policy demonstrates optimal behavior by always choosing $a_r$. (Bottom) A subset of possible data-consistent MDP models (dynamics + rewards) in the version space. The adversary always chooses the MDP that makes the reference maximally outperform the learner. In response, the learner will learn to mimic the reference outside data support to be competitive.}
    \label{fig:toy_mdp_app}
\end{figure}

We illustrate with a simple toy example why \algo intuitively demonstrates the RPI property even when $\piref$ is not covered by the data $\Dcal$. 
\algo achieves this by
\begin{enumerate*}[label=\emph{\arabic*)}]
    \item learning an MDP Model, and 
    \item adversarially training this MDP model to minimize the relative performance difference to $\piref$ during policy optimization.
\end{enumerate*}
Consider a one-dimensional discrete MDP with five possible states as shown in~\cref{fig:toy_mdp_app}. The dynamics is deterministic, and the agent always starts in the center cell. The agent receives a lower reward of 0.1 in the left-most state and a high reward of 1.0 upon visiting the right-most state. Say, the agent only has access to a dataset from a sub-optimal policy that always takes the left action to receive the 0.1 reward. Further, let's say we have access to a reference policy that demonstrates optimal behavior on the true MDP by always choosing the right action to visit the right-most state. However, it is unknown a priori that the reference policy is optimal. In such a case, typical offline RL methods can only recover the sub-optimal policy from the dataset as it is the best-covered policy in the data. Now, for the sake of clarity, consider the current learner policy is same as the behavior policy, i.e it always takes the left action. 

\algo can learn to recover the expert reference policy in this example by performing rollouts with the adversarially trained MDP model. From the realizability assumption 
we know that the version space of models contains the true model (i.e., $M^\star \in \Mcal_\alpha$). The adversary can then choose a model from this version space where the reference policy $\piref$ maximally outperforms the learner. Note, that \algo does not require the true reward function to be known. In this toy example, the model selected by the adversary would be the one that not only allows the expert policy to reach the right-most state, but also predicts the highest reward for doing so. Now, optimizing to maximize relative performance difference with respect to this model will ensure that the learner can recover the expert behavior, since the only way for the learner to stay competitive with the reference policy is to mimic the reference policy in the region outside data support.
In other words, the reason why \algo has RPI to $\piref$ is that its
adversarial model training procedure can augment the original offline data with new states and actions that would cover those generated by running the reference policy in the true environment, even though \algo does not have knowledge of $M^\star$.

\section{Further Experimental Details}\label{appendix:experiments}

\subsection{Experimental Setup and Hyper-parameters}\label{appendix:experimental_setup}

We represent our policy $\pi$, Q-functions $f_1, f_2$ and MDP model $M$ as standard fully connected neural networks. The policy is parameterized as a Gaussian with a state-dependent covariance, and we use a tanh transform to limit the actions to the action space bound similar to~\citet{haarnoja2018soft}. The MDP model learns to predict the next state distribution, rewards and terminal states, where the reward and next-state distributions part are parameterized as Gaussians with state-dependent covariances. The model fitting loss consists of negative log-likelihood for the next-state and reward and binary cross entropy for the terminal flags. In all our experiments we use the same model architecture and a fixed value of $\lambda$. We use Adam optimizer~\citep{kingma2014adam} with fixed learning rates $\eta_{fast}$ and $\eta_{slow}$ similar to~\citet{cheng2022adversarially}. Also similar to prior work~\citep{kidambi2020morel}, we let the MDP model network predict delta differences to the current state. The rollout horizon is always set to the maximum episode steps per environment. 
A complete list of hyper-parameters can be found in \cref{tab:hps}. 

\textbf{Compute:} Each run of \algo has access to 4CPUs with 28GB RAM and a single Nvidia T4 GPU with 16GB memory. With these resources each run tasks around 6-7 hours to complete. Including all runs for 4 seeds, and ablations this amounts to approximately 2500 hours of GPU compute.

\begin{minipage}[c]{0.5\textwidth}
\centering
\begin{tabular}{|c|c|}
        \hline
       Hyperparameter  & Value \\
         \hline
         model\_num\_layers & 3 \\
         model\_hidden\_size & 512 \\
         model\_nonlinearity & swish \\
         policy\_num\_layers & 3 \\
         policy\_hidden\_size & 256 \\
         policy\_nonlinearity & relu \\
         f\_num\_layers & 3 \\
         f\_hidden\_size & 256 \\
         f\_nonlinearity & relu  \\
         \hline
\end{tabular}
\captionsetup{type=figure}
\captionof{table}{Model Architecture Details}
\label{tab:architecture}
\end{minipage}
\begin{minipage}[c]{0.5\textwidth}
\centering
\begin{tabular}{|c|c|}
    \hline
     Hyperparameter  & Value \\
     \hline
     critic learning rate $\eta_\textrm{fast}$ & 5e-4 \\
     policy learning rate $\eta_\textrm{slow}$ & 5e-7 \\
     discount factor & 0.99 \\
     rollout horizon & max episode steps \\
     model buffer size & $10^6$ \\
     batch size & 125 \\
     model batch size & 125 \\
     num warmstart steps & $10^5$ \\
     $\tau$ & $5e-3$ \\
     \hline
\end{tabular}
\captionsetup{type=figure}
\captionof{table}{List of Hyperparameters.}
\label{tab:hps}
\end{minipage}

\subsection{Detailed Performance Comparison and RPI Ablations}
In \cref{tab:benchmark_results_with_std} we show the performance of \algo compared to model-free and model-based offline RL baselines with associate standard deviations over 8 seeds.
For ablation, here we also include ARMOR$^\dagger$, which is running ARMOR in \cref{alg:armor} but without the model optimizing for the Bellman error (that is, the model is not adversarial). Although ARMOR$^\dagger$ does not have any theoretical guarantees (and indeed in the worst case its performance can be arbitrarily bad), we found that ARMOR$^\dagger$ in these experiments is performing surprisingly well.
Compared with ARMOR, ARMOR$^\dagger$ has less stable performance when the dataset is diverse (e.g. \textit{-med-replay} datasets) and larger learning variance. Nonetheless, ARMOR$^\dagger$ using a single model is already pretty competitive with other algorithms. We conjecture that this is due to that \cref{alg:armor} also benefits from pessimism due to adversarially trained critics. Since the model buffer would not cover all states and actions (they are continuous in these problems), the adversarially trained critic still controls the pessimism for actions not in the model buffer, as a safe guard. As a result, the algorithm can tolerate the model quality more.

\begin{table*}[ht]
  \centering
  \begin{adjustbox}{max width=\textwidth}
  \begin{tabular}{|c|c|c|c|c|c|c|c|c|c|c|c|}

  \hline
Dataset & ARMOR & ARMOR$^\dagger$ & ARMOR$^{re}$ & MoREL & MOPO & RAMBO & COMBO & ATAC & CQL & IQL & BC \\ 
\hline
hopper-med & \textbf{101.4 $\pm$ 0.3} & \textbf{100.4 $\pm$ 1.7}& 65.3 $\pm$ 4.8 & \textbf{95.4} & 28.0 $\pm$ 12.4 & \textbf{92.8 $\pm$ 6.0} & \textbf{97.2 $\pm$ 2.2} & 85.6 & 86.6 & 66.3 & 29.0 \\ 
walker2d-med & \textbf{90.7 $\pm$ 4.4} & \textbf{91.0 $\pm$ 10.4}& 79.0 $\pm$ 2.2 & 77.8 & 17.8 $\pm$ 19.3 & \textbf{86.9 $\pm$ 2.7} & \textbf{81.9 $\pm$ 2.8} & \textbf{89.6} & 74.5 & 78.3 & 6.6 \\ 
halfcheetah-med & 54.2 $\pm$ 2.4 & 56.3 $\pm$ 0.5 & 45.2 $\pm$ 0.2  & 42.1 & 42.3 $\pm$ 1.6 & \textbf{77.6 $\pm$ 1.5} & 54.2 $\pm$ 1.5 & 53.3 & 44.4 & 47.4 & 36.1 \\ 
hopper-med-replay & \textbf{97.1 $\pm$ 4.8} & 82.7 $\pm$ 23.1 & 68.4 $\pm$ 5.2 & \textbf{93.6} & 67.5 $\pm$ 24.7 & \textbf{96.6 $\pm$ 7.0} & 89.5 $\pm$ 1.8 & \textbf{102.5} & 48.6 & \textbf{94.7} & 11.8 \\ 
walker2d-med-replay & \textbf{85.6 $\pm$ 7.5} & 78.4 $\pm$ 1.9 & 50.3 $\pm$ 5.7 & 49.8 & 39.0 $\pm$ 9.6 & \textbf{85.0 $\pm$ 15.0} & 56.0 $\pm$ 8.6 & \textbf{92.5} & 32.6 & 73.9 & 11.3 \\ 
halfcheetah-med-replay & 50.5 $\pm$ 0.9 & 49.5 $\pm$ 0.9 & 36.8 $\pm$ 1.5 & 40.2 & 53.1 $\pm$ 2.0 & \textbf{68.9 $\pm$ 2.3} & 55.1 $\pm$ 1.0 & 48.0 & 46.2 & 44.2 & 38.4 \\ 
hopper-med-exp & \textbf{103.4 $\pm$ 5.9} & 100.1 $\pm$ 10.0 & 89.3 $\pm$ 3.2 & \textbf{108.7} & 23.7 $\pm$ 6.0 & 83.3 $\pm$ 9.1 & \textbf{111.1 $\pm$ 2.9} & \textbf{111.9} & \textbf{111.0} & 91.5 & \textbf{111.9} \\ 
walker2d-med-exp & \textbf{112.2 $\pm$ 1.7} & \textbf{110.5 $\pm$ 1.4} & \textbf{105.8 $\pm$ 1.4} & 95.6 & 44.6 $\pm$ 12.9 & 68.3 $\pm$ 15.0 & \textbf{103.3 $\pm$ 5.6} & \textbf{114.2} & 98.7 & \textbf{109.6} & 6.4 \\ 
halfcheetah-med-exp & \textbf{93.5 $\pm$ 0.5} & \textbf{93.4 $\pm$ 0.3} & 61.8 $\pm$ 3.75 & 53.3 & 63.3 $\pm$ 38.0 & \textbf{93.7 $\pm$ 10.5} & \textbf{90.0 $\pm$ 5.6} & \textbf{94.8} & 62.4 & \textbf{86.7} & 35.8 \\ 
pen-human & \textbf{72.8 $\pm$ 13.9} & 50.0 $\pm$ 15.6 & 62.3 $\pm$ 8.35 & - & - & - & - & 53.1 & 37.5 & \textbf{71.5} & 34.4 \\ 
hammer-human & 1.9 $\pm$ 1.6 & 1.1 $\pm$ 1.4 & 3.1 $\pm$ 1.9 & - & - & - & - & 1.5 & \textbf{4.4} & 1.4 & 1.5 \\ 
door-human & 6.3 $\pm$ 6.0 & 3.9 $\pm$ 2.4 & 5.9 $\pm$ 2.75 & - & - & - & - & 2.5 & \textbf{9.9} & 4.3 & 0.5 \\ 
relocate-human & \textbf{0.4 $\pm$ 0.4} & \textbf{0.4 $\pm$ 0.6} &\textbf{0.3 $\pm$ 0.25} & - & - & - & - & 0.1 & 0.2 & 0.1 & 0.0 \\ 
pen-cloned & \textbf{51.4 $\pm$ 15.5} & 45.2 $\pm$ 15.8 &40.0 $\pm$ 8.25 & - & - & - & - & 43.7 & 39.2 & 37.3 & \textbf{56.9} \\ 
hammer-cloned & 0.7 $\pm$ 0.6 & 0.3 $\pm$ 0.0 &\textbf{2.7 $\pm$ 0.15} & - & - & - & - & 1.1 & \textbf{2.1} & \textbf{2.1} & 0.8 \\ 
door-cloned & -0.1 $\pm$ 0.0 & -0.1 $\pm$ 0.1 &0.5 $\pm$ 0.4 & - & - & - & - & \textbf{3.7} & 0.4 & 1.6 & -0.1 \\ 
relocate-cloned & -0.0 $\pm$ 0.0 & -0.0 $\pm$ 0.0 &  -0.0 $\pm$ 0.0 & - & - & - & - & \textbf{0.2} & -0.1 & -0.2 & -0.1 \\ 
pen-exp & 112.2 $\pm$ 6.3 & 113.0 $\pm$ 11.8 &92.8 $\pm$ 9.25 & - & - & - & - & \textbf{136.2} & 107.0 & - & 85.1 \\ 
hammer-exp & \textbf{118.8 $\pm$ 5.6} & \textbf{115.3 $\pm$ 9.3} &51.0 $\pm$ 11.05 & - & - & - & - & \textbf{126.9} & 86.7 & - & \textbf{125.6} \\ 
door-exp & \textbf{98.7 $\pm$ 4.1} & \textbf{97.1 $\pm$ 4.9} &88.4 $\pm$ 3.05 & - & - & - & - & \textbf{99.3} & \textbf{101.5} & - & 34.9 \\ 
relocate-exp & \textbf{96.0 $\pm$ 6.8} & 90.7 $\pm$ 6.3 & 64.2 $\pm$ 7.3 & - & - & - & - & \textbf{99.4} & \textbf{95.0} & - & \textbf{101.3} \\ 
\hline
  \end{tabular}
  \end{adjustbox}
  \caption{Performance comparison of \algo against baselines on the D4RL datasets. The values for \algo denote last iteration performance averaged over 4 random seeds along with standard deviations, and baseline values were taken from their respective papers. Boldface denotes performance within $10\%$ of the best performing algorithm. 
} 
  \label{tab:benchmark_results_with_std}
\end{table*}

\subsection{Effect of Residual Policy}
In~\cref{fig:rpi_comparison_results}, we show the effect on RPI of different schemes for initializing the learner for several D4RL datasets. Specifically, we compare using a residual policy(~\cref{sec:experiments}) versus behavior cloning the reference policy on the provided offline dataset for learner initialization. 
Note that this offline dataset is the suboptimal one used in offline RL and is different from the expert-level dataset used to train and produce the reference policy.
We observe that using a residual policy (purple) consistently shows RPI across all datasets. However, with behavior cloning initialization (pink), there is a large variation in performance across datasets. While RPI is achieved with behavior cloning initialization on \textit{hopper}, \textit{walker2d} and \textit{hammer} datasets,  performance can be arbitrarily bad compared to the reference on other problems.
As an ablation, we also study the effect of using a residual policy in the offline RL case where no explicit reference is provided, and the behavior cloning policy is used as the reference similar to~\cref{subsec:offline_rl_exps}. We include the results in~\cref{tab:benchmark_results_with_std} as ARMOR$^{re}$, where we observe that using a residual policy overall leads to worse performance across all datasets. This lends evidence to the fact that using a residual policy is a \emph{compromise} in instances where initializing the learner exactly to the reference policy is not possible.

\begin{figure}
     \centering
     \begin{subfigure}[b]{0.8\textwidth}
         \centering
         \includegraphics[trim={0cm, 0cm, 0cm, 0cm}, clip, width=\textwidth]{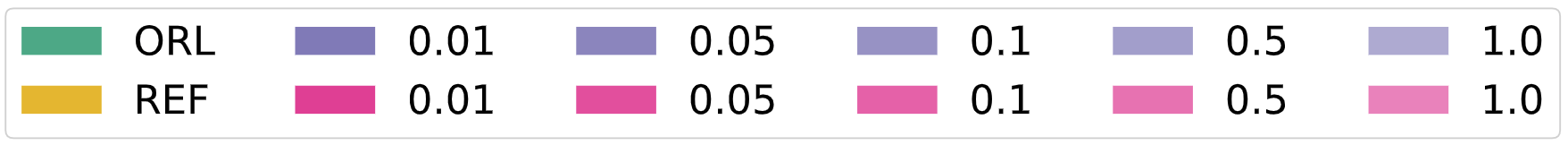}
     \end{subfigure}
     \begin{subfigure}[b]{0.49\textwidth}
         \centering
         \includegraphics[trim={0cm, 0cm, 0cm, 0cm}, clip, width=\textwidth]{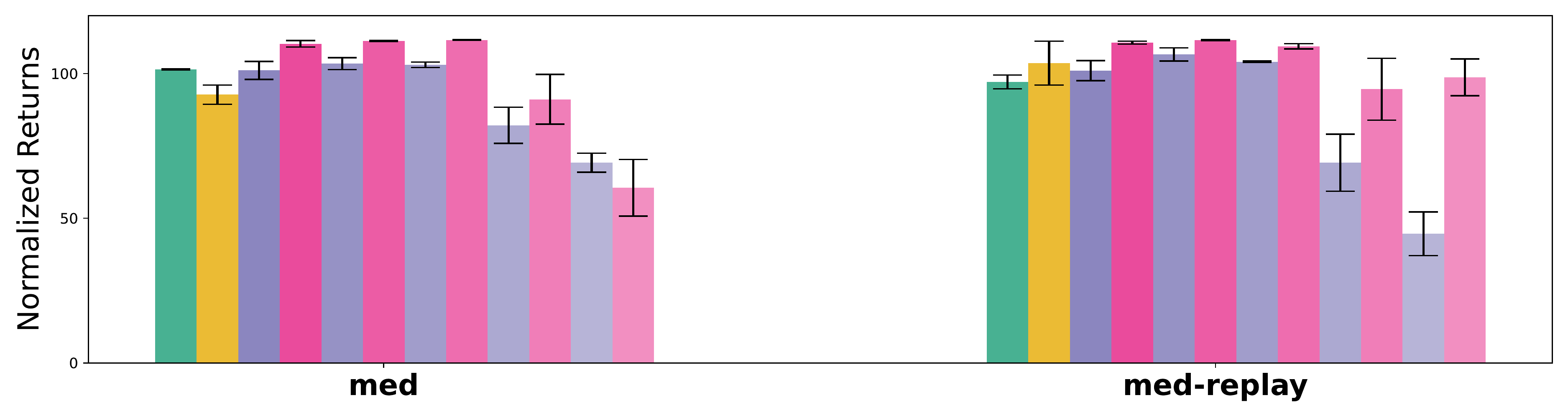}
         \caption{Hopper}
     \end{subfigure}
     \hfill
     \begin{subfigure}[b]{0.49\textwidth}
         \centering
         \includegraphics[trim={0cm, 0cm, 0cm, 0cm}, clip, width=\textwidth]{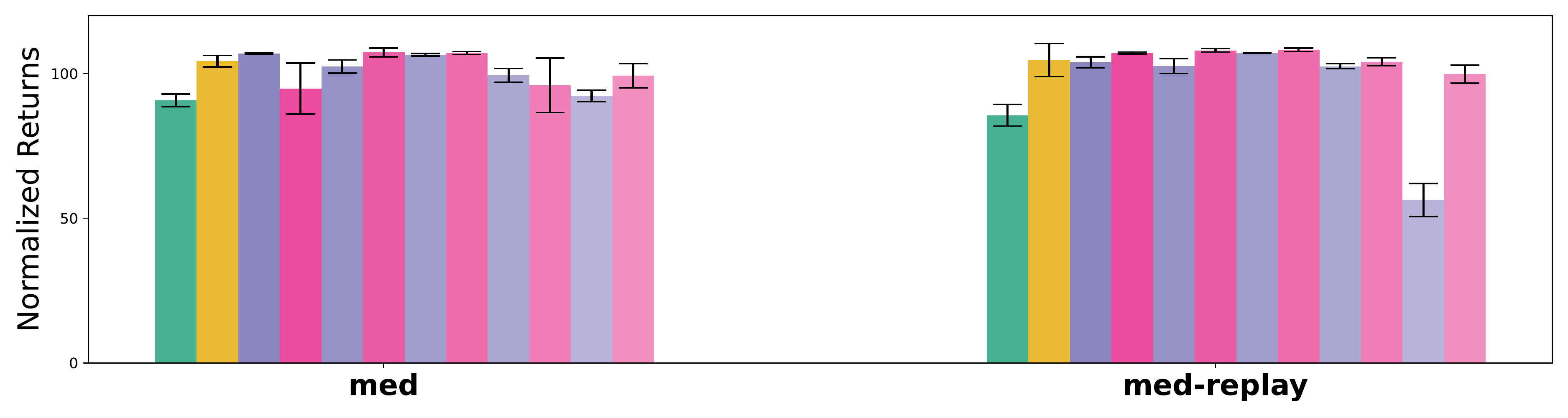}
         \caption{Walker2d}
     \end{subfigure}
     \begin{subfigure}[b]{0.49\textwidth}
         \centering
         \includegraphics[trim={0cm, 0cm, 0cm, 0cm}, clip, width=\textwidth]{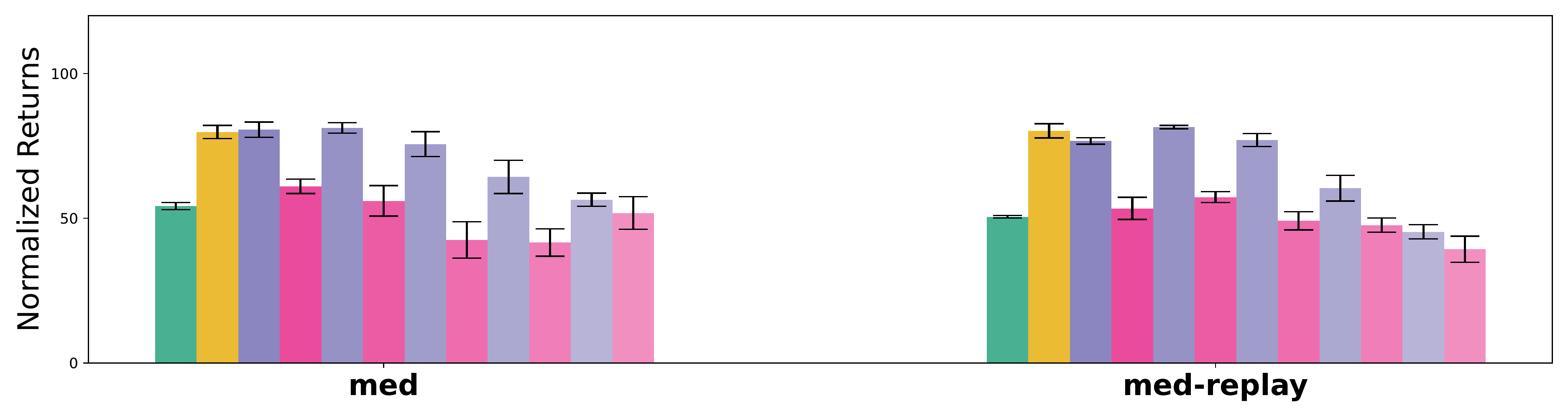}
         \caption{Halfcheetah}
     \end{subfigure}
     \begin{subfigure}[b]{0.49\textwidth}
         \centering
         \includegraphics[trim={0cm, 0cm, 0cm, 0cm}, clip, width=\textwidth]{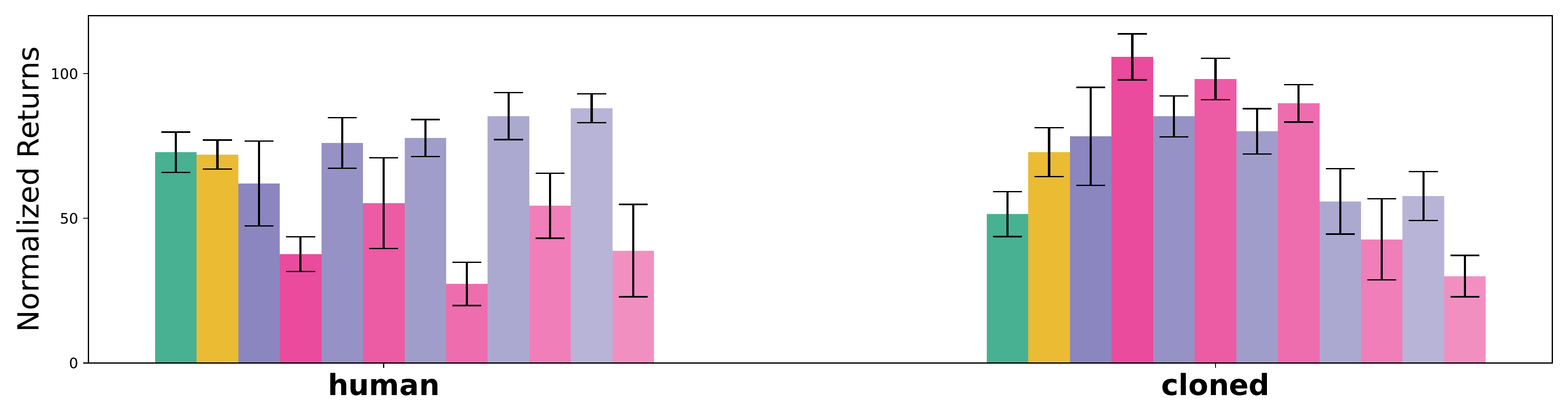}
         \caption{Pen}
     \end{subfigure}  
     \hfill
     \begin{subfigure}[b]{0.49\textwidth}
         \centering
         \includegraphics[trim={0cm, 0cm, 0cm, 0cm}, clip, width=\textwidth]{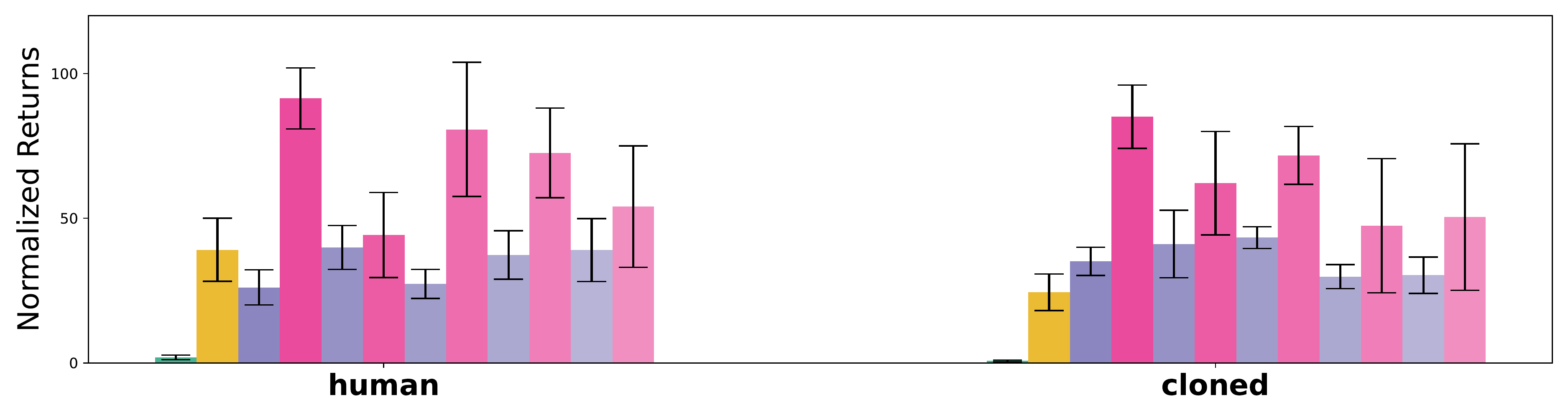}
         \caption{Hammer}
     \end{subfigure}   
     \begin{subfigure}[b]{0.49\textwidth}
         \centering
         \includegraphics[trim={0cm, 0cm, 0cm, 0cm}, clip, width=\textwidth]{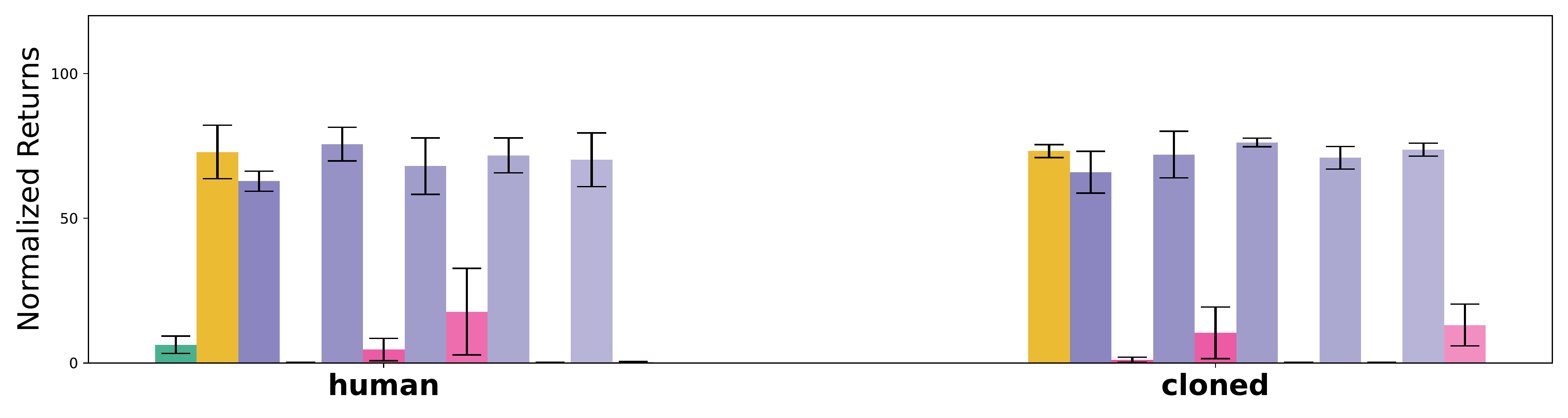}
         \caption{Door}
     \end{subfigure} 
     \hfill
     \begin{subfigure}[b]{0.49\textwidth}
         \centering
         \includegraphics[trim={0cm, 0cm, 0cm, 0cm}, clip, width=\textwidth]{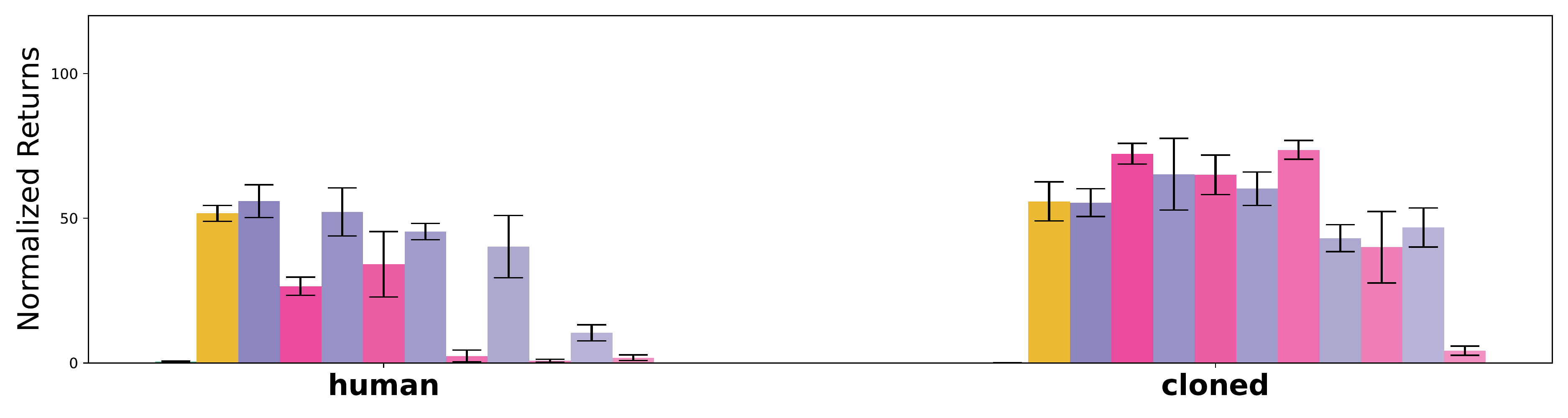}
         \caption{Relocate}
     \end{subfigure} 
    \caption{Comparison of different policy initializations for RPI with varying pessimism hyper-parameter $\beta$ . ORL denotes the performance of offline RL with ARMOR ( Table~\ref{tab:benchmark_results}), and REF is the performance of reference policy. Purple represents residual policy initialization and pink is initialization using behavior cloning of the reference on the suboptimal offline RL dataset. \vspace{-1em}}
    \label{fig:rpi_comparison_results}
\end{figure}

\subsection{Connection to Imitation Learning}
\vspace{-2mm}
\begin{wraptable}{l}{6cm}
    \centering
    \small
    \begin{tabular}{|c|c|c|}
        \hline
        Dataset & \algoIL & BC  \\
         \hline
        hopper-exp & 111.6 & 111.7\\
        walker2d-exp & 108.1 & 108.5 \\
        halfcheetah-exp & 93.9 & 94.7  \\
        \hline
    \end{tabular}
    \caption{\algoIL on expert datasets. By setting $\lambda=0$, $\beta>0$ we recover IL.\vspace{-7mm}}
    \label{tab:il_results}
\end{wraptable}
As mentioned in \cref{subsec:alg_details}, IL is a special case of \algo with $\lambda=0$. In this setting, the  Q-function can fully affect the adversarial MDP model, so the best strategy of the policy is to mimic the reference. We test this on the \textit{expert} versions of the D4RL locomotion tasks in \cref{tab:il_results}, and observe that \algo can indeed perform IL to match expert performance.

\new{
\subsection{Ablation Study: RPI for Different Reference Policies}\label{subsec:more_rpi}
Here we provide ablation study results for robust policy improvement under different reference policies for a wide range of $\beta$ values (pessimism hyper-parameter). For all the considered reference policies we present average normalized scroes for \algo and reference (REF) over multiple random seeds, and observe that \algo can consistently outperform the reference for a large range of $\beta$ values.

\textbf{Random Dataset Reference}
We use a reference policy obtained by running behavior cloning on the \textsc{random} versions of different datasets. This is equivalent to using a randomly initialized neural network as the reference.

\begin{adjustbox}{max width=\textwidth}
\begin{tabular}{|c|c|c|c|c|c|c|c|c|c|c|c|} 
\hline
Dataset & 0.01 & 0.05 & 0.1 & 0.5 & 1.0 & 10.0 & 100.0 & 200.0 & 500.0 & 1000.0 & REF \\ 
\hline
hopper-med & 1.3 & 1.5 & 4.7 & 9.6 & 20.4 & 34.8 & 25.8 & 40.8 & 25.6 & 28.9 & 1.2 \\ 
walker2d-med & 0.0 & 0.0 & 0.2 & 1.5 & 4.0 & 17.4 & 23.6 & 12.4 & 12.1 & 20.1 & 0.0 \\ 
halfcheetah-med & 0.0 & 0.1 & 0.1 & 0.7 & 1.2 & -0.1 & -0.7 & 0.1 & 1.1 & -0.3 & -0.1 \\ 
hopper-med-replay & 1.3 & 3.0 & 8.2 & 13.3 & 39.5 & 57.6 & 48.0 & 34.4 & 51.0 & 32.9 & 1.2 \\ 
walker2d-med-replay & 0.0 & 0.0 & 0.1 & 4.0 & 5.9 & 13.1 & 10.8 & 14.1 & 16.5 & 16.6 & 0.0 \\ 
halfcheetah-med-replay & -0.2 & 0.4 & 0.3 & 0.7 & 0.9 & 7.7 & 5.8 & 8.2 & 4.9 & 6.1 & -0.2 \\ 
\hline
\end{tabular}
\end{adjustbox}

\textbf{Hand-designed Reference}
In this experiment we use a hand-designed reference policy called \textsc{RandomBangBang}, that selects either the minimum of maximum action in the data with 0.5 probability each.

\begin{adjustbox}{max width=\textwidth}
\begin{tabular}{|c|c|c|c|c|c|c|c|c|c|c|c|}
\hline
Dataset & 0.01 & 0.05 & 0.1 & 0.5 & 1.0 & 10.0 & 100.0 & 200.0 & 500.0 & 1000.0 & REF \\ 
\hline
hopper-med & 8.5 & 15.0 & 15.8 & 5.0 & 8.1 & 11.1 & 37.0 & 19.6 & 12.4 & 25.6 & 1.2 \\ 
walker2d-med & 22.7 & 36.1 & 31.6 & 42.6 & 12.0 & 55.9 & 33.6 & 37.8 & 36.6 & 49.6 & 0.1 \\ 
halfcheetah-med & 10.6 & 9.8 & 19.1 & 11.7 & 13.1 & 15.6 & 14.3 & 3.8 & 8.3 & 11.1 & -1.0 \\ 
hopper-med-replay & 33.9 & 54.8 & 62.6 & 66.7 & 55.3 & 57.0 & 67.7 & 79.5 & 70.7 & 62.0 & 1.3 \\ 
walker2d-med-replay & 11.6 & 35.5 & 47.5 & 40.4 & 42.7 & 47.6 & 64.8 & 49.9 & 44.7 & 33.0 & 0.1 \\ 
halfcheetah-med-replay & 13.1 & 10.1 & 11.3 & 9.7 & 12.4 & 17.6 & 9.0 & 14.5 & 11.1 & 9.1 & -1.3 \\ 
\hline
\end{tabular}
\end{adjustbox}
}

	
\end{document}